\DeclareMathOperator{\diag}{diag}
\newcommand{\tol}{\mathtt{tol}}
\newcommand{\LzeroneSVM}{$L_{0/1}$-SVM}
\newtheorem{assumption}{Assumption}
\newcommand{\bmat}{\left[ \begin{matrix}}
	\newcommand{\emat}{\end{matrix} \right]}
\newcommand{\innerprod}[2]{\langle{#1},\,{#2}\rangle}
\DeclareMathOperator{\argmin}{argmin}
\DeclareMathOperator{\prox}{prox}
\newcommand{\Rbb}{\mathbb R}
\newcommand{\Hbb}{\mathbb H}
\newcommand{\Fbb}{\mathbb F}
\newcommand{\Nbb}{\mathbb N}
\newcommand{\xb}{\mathbf  x}
\newcommand{\yb}{\mathbf  y}
\newcommand{\sbf}{\mathbf  s}  
\newcommand{\zb}{\mathbf  z}
\newcommand{\wb}{\mathbf  w}
\newcommand{\vb}{\mathbf  v}
\newcommand{\fb}{\mathbf  f}
\newcommand{\db}{\mathbf  d}
\newcommand{\ub}{\mathbf  u}
\newcommand{\rb}{\mathbf  r}
\newcommand{\oneb}{\mathbf 1}
\newcommand{\zerob}{\mathbf 0}
\newcommand{\rhob}{\boldsymbol{\rho}}
\newcommand{\thetab}{\boldsymbol{\theta}}
\newcommand{\lambdab}{\boldsymbol{\lambda}}
\newcommand{\test}{\text{test}}
\DeclareMathOperator{\sign}{sign}
\newcommand{\stepfunc}{\|(\cdot)_+\|_0}
\DeclareMathOperator{\rank}{rank}
\newcommand{\Acal}{\mathcal{A}}
\newcommand{\Lcal}{\mathcal{L}}
\newcommand{\Ical}{\mathcal{I}}
\newcommand{\Xcal}{\mathcal{X}}
\newcommand{\Kcal}{\mathcal{K}}
\newcommand{\Ocal}{\mathcal{O}}
\begin{document}

\title{MKL-$L_{0/1}$-SVM\thanks{A preliminary version of this paper \citep{shi2023admm} will be presented at the 62nd IEEE Conference on Decision and Control (CDC 2023).}}

\author{\name Bin Zhu and Yijie Shi \\
	\email zhub26@mail.sysu.edu.cn and shiyj27@mail2.sysu.edu.cn\\
	\addr School of Intelligent Systems Engineering\\
	Sun Yat-sen University\\
	Gongchang Road 66, 518107 Shenzhen, China
       }

\editor{My editor}

\maketitle

\begin{abstract}


This paper presents a Multiple Kernel Learning (abbreviated as MKL) framework for the Support Vector Machine (SVM) with the $(0, 1)$ loss function. Some KKT-like first-order optimality conditions are provided and then exploited to develop a fast ADMM algorithm to solve the nonsmooth nonconvex optimization problem. Numerical experiments on real data sets show that the performance of our MKL-$L_{0/1}$-SVM is comparable with the one of the leading approaches called SimpleMKL developed by Rakotomamonjy, Bach, Canu, and Grandvalet [Journal of Machine Learning Research, vol.~9, pp.~2491--2521, 2008].
\end{abstract}

\begin{keywords}
  Kernel SVM, $(0,1)$-loss function, nonsmooth nonconvex optimization, multiple kernel learning, alternating direction method of multipliers.
\end{keywords}

\section{Introduction}


The support vector machine (SVM) is an important tool in machine learning with numerous applications \citep{vapnik2000nature, theodoridis2020machine}.
The theory can be traced back to the seminal work of \cite{cortes1995support}. 
In the basic setting, the SVM deals with the binary classification task where
a data set $\{(\xb_i,y_i) \in \Rbb^n\times \{-1, 1\}: i\in\Nbb_m\}$ is given and one is asked to construct a function $\tilde{f}(\xb)$ in order to separate the two classes of feature vectors $\xb_i$ with labels $y_i=-1$ or $1$, and to predict the labels of unseen feature vectors.
To this end, the SVM first lifts the problem to a \emph{reproducing kernel Hilbert space}\,\footnote{The theory of RKHS goes back to \cite{aronszajn1950theory} and many more, see e.g., \cite{paulsen2016introduction}.} (RKHS) $\Hbb$, in general infinite-dimensional and equipped with a \emph{positive definite} kernel function $\kappa : \Rbb^n \times \Rbb^n \to \Rbb$, via the feature mapping
\begin{equation}
\xb \mapsto \phi(\xb) := \kappa(\cdot,\xb) \in \Hbb,
\end{equation}
and then considers decision (or discriminant) functions of the form
\begin{equation}\label{discrimi_func}
\tilde{f}(\xb) = b + \innerprod{w}{\phi(\xb)}_{\Hbb} = b+w(\xb),
\end{equation}
where $b\in\Rbb$, $w\in\Hbb$, $\innerprod{\cdot}{\cdot}_{\Hbb}$ the inner product associated to the RKHS $\Hbb$, and the second equality is due to the so-called \emph{reproducing property}. 
It is not difficult to see that such a decision function is in general nonlinear in $\xb$, but is indeed linear with respect to $\phi(\xb)$ in the feature space $\Hbb$.
Once $\tilde{f}$ is determined, the label of $\xb$ is assigned via $y(\xb)=\sign [\tilde{f}(\xb)]$ where $\sign(\cdot)$ is the sign function which gives $+1$ for a positive number, $-1$ for a negative number, and left undefined at zero.

It remains to estimate the unknown quantities $b$ and $w$ in \eqref{discrimi_func}, and this is done via the solution of
the unconstrained optimization problem
\begin{equation}\label{optim_infinit_dim}
\min_{\substack{w\in\Hbb, \, b\in\Rbb, \\ \tilde f(\cdot) = w(\cdot) + b}}\quad \frac{1}{2}\|w\|_{\Hbb}^2 + C \sum_{i} \Lcal(y_i, \tilde f(\xb_i)),
\end{equation}
where $\|w\|_{\Hbb}^2 = \innerprod{w}{w}_{\Hbb}$ is the squared norm of $w$ induced by the inner product, $\Lcal(\cdot,\cdot)$ is a suitable loss function, and $C>0$ is a regularization parameter.
For the choice of $\Lcal$, \cite{cortes1995support} suggested the $(0,1)$ loss function, also called $L_{0/1}$ loss in \cite{wang2021support}, for quantifying the error of classification which essentially counts the number of misclassified samples. More precisely, we take
\begin{equation}\label{L_0/1-loss}
\Lcal_{0/1} (y, \tilde{f}(\xb)) := H(1-y\tilde{f}(\xb))
\end{equation}
where $H$ is the Heaviside unit step function
\begin{equation}\label{func_unit_step}
H(t) = \begin{cases}
1, & t>0\\
0, & t\leq 0,
\end{cases}
\end{equation}
see the left panel Fig.~\ref{fig:step_and_ReLU}.
\begin{figure}
	\begin{subfigure}[H]{0.5\textwidth}
		\centering
		\begin{tikzpicture}
		\draw[line width=0.5pt][->](-2,0)--(2,0)node[left,below,font=\tiny]{$t$};
		\draw[line width=0.5pt][->](0,-0.8)--(0,1.4)node[right,font=\tiny]at(0.3,1.2){$H(t)=|t_+|_0$};
		\node[below,font=\tiny] at (0.15,0){0};
		\node[right,below,font=\tiny]at(0.14,1){1};
		\draw[color=blue, thick,smooth,domain=0:1.8]plot(\x,1);
		\draw[color=blue, thick,smooth,domain=-1.8:0.0]plot(\x,0);
		\draw[color=blue,fill=blue,smooth]circle(0.03);
		\draw[fill = white](0,1)circle(0.03);
		\end{tikzpicture}
	\end{subfigure}
	\hfill
	\begin{subfigure}[H]{0.5\textwidth}
		\centering
		\begin{tikzpicture}
		\draw[line width=0.5pt][->](-2,0)--(2,0)node[left,below,font=\tiny]{$t$};
		\draw[line width=0.5pt][->](0,-0.8)--(0,1.4)node[right,font=\tiny]at(0.6,1.2){$t_+$};
		\node[below,font=\tiny] at (0.15,0){0};
		\draw[color=orange, thick,smooth,domain=0:1.2]plot(\x,\x);
		\draw[color=orange, thick,smooth,domain=-1.8:0.0]plot(\x,0);
		\draw[color=orange,fill=blue,smooth]circle(0.03);
		\end{tikzpicture}
	\end{subfigure}
	\caption{\emph{Left}: the unit step function. \emph{Right}: the function $t_+ := \max\{0, t\}$.}
	\label{fig:step_and_ReLU}
\end{figure}
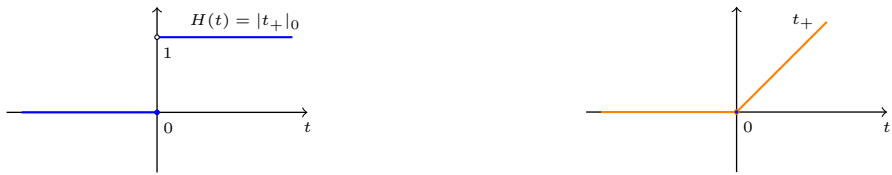
However, it was also pointed out in \cite{cortes1995support} that the resulting optimization problem is \emph{NP-complete, nonsmooth, and nonconvex} due to the discrete nature of the $(0,1)$ loss.
The subsequent research focused on designing other (easier) loss functions, notably convex ones like the \emph{hinge} loss in which the step function $H(t)$ in \eqref{L_0/1-loss} is replaced with $t_+$ in the right panel of Fig.~\ref{fig:step_and_ReLU}.
However, the hinge loss has the drawback of being unbounded and thus more sensitive to outliers in the data. Other convex and nonconvex loss functions for the SVM can be found in the excellent literature review in \citet[Section~2]{wang2021support}.



In recent years, there is a resurging interest in the original SVM problem with the $(0,1)$ loss, abbreviated as ``$L_{0/1}$-SVM'', following theoretical and algorithmic developments for optimization problems with the ``\emph{$\ell_0$ norm}'', see e.g., \cite{nikolova2013description,zhou2021quadratic,zhou2021global} and the references therein. In particular, \cite{wang2021support} proposed KKT-like optimality conditions for the \emph{linear} $L_{0/1}$-SVM optimization problem and an efficient Alternating Direction Method of Multipliers (ADMM) solver to obtain an \emph{approximate} solution whose performance is competitive among other SVM models.
In this work, we draw inspiration from the aforementioned papers and present a \emph{multiple-kernel} version of the theory in which the ambient functional space has a richer structure than the usual Euclidean space. More precisely, we shall formulate the $L_{0/1}$-SVM problem in the MKL context, see e.g., \citet{bach2004multiple,lanckriet2004learning,sonnenburg2006large,rakotomamonjy2008simplemkl}. Clearly, the MKL framework can offer much more flexibility than the single-kernel formulation by letting the optimization algorithm determine the best combination of different kernel functions. In this sense, our results represent a substantial generalization of the work in \cite{wang2021support} while maintaining the core features of the linear \LzeroneSVM.

The contributions of this paper are described next.
We show that the optimal solutions of our MKL-\LzeroneSVM\ problem can also be characterized by a set of KKT-like conditions. These conditions can be readily exploited in designing an ADMM algorithm  to solve the optimization problem despite the difficulties induced by the discontinuity and nonconvexity of the $L_{0/1}$ loss. Two working sets, one for the data and the other for the kernel combination, are employed to enhance the solution speed and to render sparsity in the solution. Moreover, any limit point generated by the algorithm is proved to be locally optimal. At last, numerical simulations show that our MKL-\LzeroneSVM\ is also competitive in comparison with the SimpleMKL approach in \cite{rakotomamonjy2008simplemkl}.

The remainder of this paper is organized as follows. Section~\ref{sec:prob} reviews the classic \LzeroneSVM\ in the single-kernel case 
and discusses the MKL framework.
Section~\ref{sec:optimality} establishes the optimality theory for the MKL-\LzeroneSVM\ problem, 
and in Section~\ref{sec:algorithm} we propose an ADMM algorithm to solve the optimization problem. Finally, numerical experiments and concluding remarks are provided in Sections~\ref{sec:sims} and \ref{sec:Conclusion},  respectively.

\subsection*{Notation}

$\Rbb_+ := \{x\in\Rbb : x\geq 0\}$ denotes the set of nonnegative reals, and $\Rbb_+^n :=\Rbb_+\times\cdots\times \Rbb_+$ the $n$-fold Cartesian product. $\Nbb_m:=\{1, 2, \dots, m\}$ is a finite index set for the data points and $\Nbb_L:=\{1, 2, \dots, L\}$ for the kernels.
Throughout the paper, the summation variables $i\in\Nbb_m$ is reserved for the data index, and $\ell\in\Nbb_L$ for the kernel index. We write $\sum_{i}$ and $\sum_{\ell}$ in place of $\sum_{i=1}^m$ and $\sum_{\ell=1}^L$ to simplify the notation.

\section{Problem Formulation}\label{sec:prob}


In all kernel-based methods, an important issue is to select a suitable kernel and its parameter which is also known as \emph{hyperparameter}. Such a selection can be done via cross-validation once the functional form of the kernel is specified. In this context, multiple kernel learning provides an alternative in which one employs a set of different kernels $\{\kappa_\ell : \ell=1,\dots, L\}$ and considers the SVM problem in the RKHS with a linearly combined kernel $\sum_{\ell}d_\ell\kappa_\ell$.  Each $d_\ell\geq 0$ is a free parameter included into the optimization problem. In other words, one seeks to simultaneously find the best combination of the kernels and the optimal decision function via the solution of the MKL problem.

Before formally stating our MKL optimization problem for the \LzeroneSVM, we shall first briefly continue the discussion on the single-kernel problem in the Introduction, and then describe the necessary functional space setup borrowed from \cite{rakotomamonjy2008simplemkl}.


\subsection{More on the Single-Kernel Case}

In this subsection, we further discuss the single-kernel version of the $L_{0/1}$-SVM along the lines of \cite{shi2023admm} and set up the notation.
The optimization problem \eqref{optim_infinit_dim} is cast on the RKHS $\Hbb$ which could be infinite-dimensional. However, one can appeal to the celebrated \emph{representer theorem} \citep{kimeldorf1971some} to reduce the problem to \emph{finite} dimensions. More precisely, by the \emph{semiparametric} representer theorem \citep{scholkopf-smola2001earning}, any minimizer of \eqref{optim_infinit_dim} must have the form
\begin{equation}\label{discrimi_func_parametric}
	\tilde f(\cdot) = \sum_{i} w_i \kappa(\,\cdot\,, \xb_i) + b,
\end{equation}
so that the desired function $w(\cdot)$ is completely parametrized by the $m$-dimensional vector $\wb=[w_1,\dots,w_m]$ which defines the the linear combination of the \emph{kernel sections} $\kappa(\,\cdot\,, \xb_i)$. After some algebra involving the \emph{kernel trick}, we obtain a finite-dimensional optimization problem
\begin{equation}\label{optim_finit_dim}
	\min_{\substack{\wb\in\Rbb^m, \, b\in\Rbb}}\ \ J(\wb,b) := \frac{1}{2} \wb^\top K \wb + C \|(\oneb - A\wb -b\yb)_+\|_0
\end{equation}
where,
\begin{itemize}
	\item $K=K^\top$ is the \emph{kernel matrix}
	\begin{equation}
		\bmat \kappa(\xb_1, \xb_1) & \cdots & \kappa(\xb_1,\xb_m)\\
		\vdots & \ddots & \vdots \\
		\kappa(\xb_m,\xb_1) & \cdots & \kappa(\xb_m,\xb_m) \emat \in \Rbb^{m\times m}
	\end{equation}
	which is \emph{positive semidefinite} by construction, 
	\item $\oneb\in\Rbb^m$ is a vector whose components are all $1$'s, 
	\item $\yb = [y_1,\dots, y_m]^\top$ is the vector of labels, 
	\item the matrix $A = D_{\yb} K$ is such that $D_{\yb} = \diag(\yb)$ is the diagonal matrix whose $(i,i)$ entry is $y_i$, 
	\item the function $t_+ = \max\{0, t\}$ takes the positive part of the argument when applied to a scalar
	(right panel of Fig.~\ref{fig:step_and_ReLU}), and $\vb_+ := [(v_1)_+, \dots, (v_m)_+]^\top$ represents componentwise application of the scalar function,
	\item $\|\vb\|_0$ is the $\ell_0$ norm\,\footnote{Since ``$\ell^p$-norms'' are not \emph{bona fide} norms for $0\leq p<1$, it may be better called $\ell_0$ pesudonorm.} that counts the number of nonzero components in the vector $\vb$.
\end{itemize}
Clearly, the composite function $\|\vb_+\|_0$ counts the number of positive components in $\vb$. For a scalar $t$, it coincides with the step function in \eqref{func_unit_step}.

\begin{remark}\label{rem_redu}
	
	The linear \LzeroneSVM\ studied in \cite{wang2021support} can be viewed as a special case of the above problem where one employs a \emph{homogeneous polynomial} kernel
		$\kappa(\xb, \yb) = (\xb^\top \yb)^d$
	with the degree parameter $d=1$. 
		In general, if the kernel function $\kappa(\xb,\yb)$ induces a \emph{finite-dimensional} RKHS $($as it is the case for polynomial kernels$)$, then the kernel matrix $K$, which is in fact a \emph{Gram matrix}, must be \emph{rank-deficient} when $m$ is sufficiently large. In such a situation, we have the rank factorization $K = X^\top X$ such that $X\in\Rbb^{r\times m}$ with $r=\rank K$. For example, in the previous case of $\kappa(\xb, \yb) = \xb^\top \yb$, we have $X=[\xb_1, \dots, \xb_m]\in\Rbb^{n\times m}$ if such an $X$ has full row rank. Then after a change of variables $\tilde{\wb}=X\wb$, the optimization problem \eqref{optim_finit_dim} further reduces to
		\begin{equation}\label{optim_finit_dim_redu}
			\min_{\substack{\tilde\wb\in\Rbb^r, \, b\in\Rbb}}\ \ \tilde J(\tilde\wb,b) := \frac{1}{2} \|\tilde\wb\|^2 + C \|(\oneb - \tilde A\tilde\wb -b\yb)_+\|_0
			\end{equation}
		where $\tilde{A}=D_{\yb} X^\top\in\Rbb^{m\times r}$ is a tall and thin matrix. This is almost exactly the problem studied in \cite{wang2021support}.
\end{remark}


For reasons discussed in Remark~\ref{rem_redu}, in the remaining part of this paper, we shall always assume that the kernel matrix $K$ is \emph{positive definite}. 
This assumption holds in particular, for the \emph{Gaussian} kernel 
\begin{equation}\label{Gaussian_kernel}
	\kappa(\xb,\yb) = \exp \left( -\frac{\|\xb-\yb\|^2}{2\sigma^2}\right),
\end{equation}
where $\sigma>0$ is a hyperparameter, see \cite{slavakis2014online}.
In such a case, the matrix $A=D_{\yb} K$ in \eqref{optim_finit_dim} is also \emph{invertible} since $D_{\yb}$ is a diagonal matrix
whose diagonal entries are the labels $-1$ or $1$. Indeed, we have $D_{\yb}^2 = D_{\yb}^\top D_{\yb}=I$.


\subsection{Functional Space for MKL}

For each $\ell\in\Nbb_L$, let $\Hbb_\ell$ be a RKHS of functions on $\Xcal\subset \Rbb^n$ with the kernel $\kappa_\ell(\cdot,\cdot)$ and the inner product $\innerprod{\cdot}{\cdot}_{\Hbb_\ell}$. Moreover, take $d_\ell\in\Rbb_+$, and define a Hilbert space $\Hbb_\ell'\subset \Hbb_\ell$ as
\begin{equation}
	\mathbb{H}_\ell^\prime :=\left\{ f\in \mathbb{H}_\ell : \frac{\|f\|_{\mathbb{H}_\ell}}{d_\ell}<\infty\right\}
\end{equation}
endowed with the inner product
\begin{equation}
	\left< f, g \right>_{\mathbb{H}_\ell^\prime}=\frac{\left< f, g \right>_{\Hbb_\ell}}{d_\ell}.
\end{equation}
We use the convention that ${x}/{0}=0$ if $x=0$ and $\infty$ otherwise. This means that, if $d_\ell=0$ then a function $f\in \mathbb{H}_\ell$ belongs to the subspace $\mathbb{H}_\ell^\prime$ only if $f=0$. In such a case, $\mathbb{H}_\ell^\prime$ becomes a trivial space containing only the null element.
Within this framework,  $\Hbb_\ell^\prime$ is a RKHS with the kernel $\kappa_\ell'(\xb,\yb)=d_\ell \kappa_\ell(\xb,\yb)$ since
\begin{equation}
		\forall f\in \Hbb_\ell^\prime\subset \Hbb_\ell,\ f(\textbf{x}) =\left<f(\cdot),\kappa_\ell(\xb,\cdot)\right>_{\Hbb_\ell} 
		 =\frac{1}{d_\ell} \left<f(\cdot),d_\ell\kappa_\ell(\xb,\cdot)\right>_{\Hbb_\ell} 
		=\left<f(\cdot),d_\ell\kappa_\ell(\xb,\cdot)\right>_{\Hbb_\ell^\prime}.
\end{equation}

Define $\Fbb:=\Hbb_1'\times \Hbb_2' \times \cdots \times \Hbb_L'$ as the Cartesian product of the RKHSs $\{\Hbb_\ell'\}$, which is itself a Hilbert space with the inner product
\begin{equation}
	\innerprod{(f_1, \dots, f_L)}{(g_1, \dots, g_L)}_\Fbb = \sum_{\ell} \innerprod{f_\ell}{g_\ell}_{\Hbb_\ell'}.
\end{equation}
Let $\Hbb:=\bigoplus_{\ell=1}^L \Hbb_\ell'$ be the \emph{direct sum} of the RKHSs $\{\Hbb_\ell'\}$, which is also a RKHS with the kernel function
\begin{equation}\label{combina_kernels}
	\kappa(\xb,\yb) = \sum_{\ell} d_\ell \kappa_\ell(\xb,\yb),
\end{equation}
see \cite{aronszajn1950theory}.
Moreover, the squared norm of $f\in \Hbb$ is known as
\begin{equation}\label{sqr_norm_direct_sum}
	\begin{aligned}
		\|f\|^2_{\Hbb} = \min \left\{ \sum_{\ell} \|f_\ell\|^2_{\Hbb_\ell'} = \sum_{\ell} \frac{1}{d_\ell} \|f_\ell\|^2_{\Hbb_\ell} : f=\sum_{\ell} f_\ell\
		 \text{such that}\ f_\ell\in\Hbb_\ell' \right\}.
	\end{aligned}
\end{equation}
The vector $\db=[d_1,\dots,d_L]^\top\in\Rbb_+^L$ is seen as a tunable parameter for the linear combination of kernels $\{\kappa_\ell\}$ in \eqref{combina_kernels}.

\subsection{The MKL-$L_{0/1}$-SVM Problem}

We take inspiration from \cite{rakotomamonjy2008simplemkl} and formulate the MKL-$L_{0/1}$-SVM problem as
\begin{subequations}\label{opt_MKL_01}
	\begin{align}
		& \underset{\substack{\fb=(f_1, \dots, f_L)\in \Fbb \\ \db\in\Rbb^L,\ b\in\Rbb}}{\min}
		& & \frac{1}{2} \sum_{\ell} \frac{1}{d_\ell} \|f_\ell\|^2_{\Hbb_\ell} + C\sum_{i} \Lcal_{0/1}(y_i, \tilde f(\xb_i)) \nonumber \\
		& \qquad\text{s.t.}
		& & d_\ell \geq 0,\ \ell\in\Nbb_L \label{d_l_simplex_01} \\ 
		& & & \sum_{\ell} d_\ell =1 \label{d_l_simplex_02} \\
		& & & \tilde f(\cdot) = \sum_{\ell} f_\ell(\cdot) + b \nonumber
	\end{align}
\end{subequations}
where $C>0$ is a regularization parameter. The first (regularization) term in the objective function is chosen so due to its convexity, see \citet[Appendix~A.1]{rakotomamonjy2008simplemkl}, which facilitates theoretical analysis. Moreover, constraints \eqref{d_l_simplex_01} and \eqref{d_l_simplex_02} define the standard $(L-1)$-simplex which ensures that the combined kernel is again positive definite.
Also, the compactness property of the simplex is useful in the proof that a minimizer exists.

The last constraint in \eqref{opt_MKL_01} can be safely eliminated by a substitution into the objective function. 
Next, define a new variable $\ub\in\Rbb^m$ by letting $u_i := 1-y_i(f(\xb_i)+b)$ where $f=\sum_{\ell} f_\ell$. Then the next lemma is a fairly straightforward result.
\begin{lemma}\label{lem_equiv_opt_01}
	The optimization problem \eqref{opt_MKL_01} is equivalent to the following one:
	\begin{subequations}\label{opt_MKL_02}
		\begin{align}
			& \underset{\substack{f\in\Hbb,\ \db\in\Rbb^L\\ b\in\Rbb,\ \ub\in \Rbb^m}}{\min}
			& & \frac{1}{2} \|f\|^2_{\Hbb} + C\|\ub_+\|_0 \\
			& \quad\ \text{s.t.}
			& & \eqref{d_l_simplex_01}\ \text{and}\ \eqref{d_l_simplex_02} \nonumber \\
			& & & u_i + y_i(f(\xb_i) +b) =1,\ i\in\Nbb_m 
			\label{equal_constraint_u}
		\end{align}
	\end{subequations}
	in the sense that they have the same set of minimizers once we introduce the same equality constraint for $\ub$ in \eqref{opt_MKL_01} and identify $f=\sum_{\ell} f_\ell$.
\end{lemma}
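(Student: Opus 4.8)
The plan is to prove the equivalence by exhibiting a value-preserving correspondence between feasible points of the two problems, treating the loss term and the regularization term separately. The loss term is the easy part: since $\tilde f(\xb_i) = f(\xb_i) + b$ with $f = \sum_\ell f_\ell$, the defining relation $\Lcal_{0/1}(y_i, \tilde f(\xb_i)) = H(1 - y_i \tilde f(\xb_i))$ together with the substitution $u_i = 1 - y_i(f(\xb_i) + b)$ gives $\Lcal_{0/1}(y_i, \tilde f(\xb_i)) = H(u_i)$. Since $\|\ub_+\|_0$ counts the strictly positive components of $\ub$ and $H(u_i) = 1$ precisely when $u_i > 0$, summing over $i \in \Nbb_m$ yields $C \sum_i \Lcal_{0/1}(y_i, \tilde f(\xb_i)) = C\|\ub_+\|_0$, so the two loss terms coincide once the equality constraint \eqref{equal_constraint_u} is imposed. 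Note that defining $u_i$ this way is exactly the act of introducing the $\ub$-constraint into \eqref{opt_MKL_01}, as the lemma stipulates.

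The substance of the argument lies in the regularization term, where I would invoke the variational formula \eqref{sqr_norm_direct_sum} for the norm on the direct-sum space $\Hbb$. The observation is that in \eqref{opt_MKL_01} the total function $f := \sum_\ell f_\ell \in \Hbb$ is the only quantity that enters both the loss and the constraint \eqref{equal_constraint_u}; the individual components $f_\ell$ appear only through the regularizer $\tfrac{1}{2} \sum_\ell \tfrac{1}{d_\ell}\|f_\ell\|_{\Hbb_\ell}^2$. Hence, for fixed $\db$ and fixed total $f$, one may first minimize over all decompositions $f = \sum_\ell f_\ell$ with $f_\ell \in \Hbb_\ell'$, and by \eqref{sqr_norm_direct_sum} this partial minimum equals exactly $\tfrac{1}{2}\|f\|_\Hbb^2$. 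Replacing the regularizer by this value turns \eqref{opt_MKL_01} into an optimization over $f \in \Hbb$ directly, which is precisely \eqref{opt_MKL_02}.

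To make the correspondence of minimizers rigorous, I would argue in both directions. Given a feasible $(\fb, \db, b)$ of \eqref{opt_MKL_01}, set $f = \sum_\ell f_\ell$ and $u_i = 1 - y_i(f(\xb_i)+b)$; then $(f, \db, b, \ub)$ is feasible for \eqref{opt_MKL_02}, the loss terms agree by the computation above, and $\|f\|_\Hbb^2 \le \sum_\ell \tfrac{1}{d_\ell}\|f_\ell\|_{\Hbb_\ell}^2$ since the left-hand side is the minimum in \eqref{sqr_norm_direct_sum}, so the objective of \eqref{opt_MKL_02} does not exceed that of \eqref{opt_MKL_01}. Conversely, given a feasible $(f, \db, b, \ub)$ of \eqref{opt_MKL_02}, the attainment of the minimum in \eqref{sqr_norm_direct_sum} furnishes a decomposition $f = \sum_\ell f_\ell$ with $\sum_\ell \tfrac{1}{d_\ell}\|f_\ell\|_{\Hbb_\ell}^2 = \|f\|_\Hbb^2$; taking $\fb = (f_1, \dots, f_L)$ gives a feasible point of \eqref{opt_MKL_01} with identical objective value. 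These two comparisons together show that the optimal values coincide and that minimizers map to minimizers under the identification $f \leftrightarrow \sum_\ell f_\ell$.

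The only point requiring care, which I would flag as the main obstacle, is the backward direction, where one must produce an \emph{actual} decomposition realizing the minimum in \eqref{sqr_norm_direct_sum}. This is exactly why \eqref{sqr_norm_direct_sum} is stated as a minimum rather than an infimum, so that the minimizing $\{f_\ell\}$ are guaranteed to exist; I would cite the direct-sum RKHS construction of \cite{aronszajn1950theory} for this attainment. I would also note that the convention $x/0 = 0$ for $x = 0$ (and $\infty$ otherwise) renders the degenerate case $d_\ell = 0$ consistent throughout, since it forces the corresponding $f_\ell = 0$ and collapses $\Hbb_\ell'$ to the trivial space, so no feasible point is lost or gained in either direction.
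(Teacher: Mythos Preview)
Your proposal is correct and follows essentially the same approach as the paper: both arguments hinge on the variational formula \eqref{sqr_norm_direct_sum} to identify the inner minimization over decompositions $f=\sum_\ell f_\ell$ with $\tfrac{1}{2}\|f\|_\Hbb^2$, and then compare minimizers in both directions. Your write-up is in fact more explicit than the paper's, which shows only one direction and omits the converse as ``similar''; the additional care you take with the loss-term identification, the attainment of the minimum in \eqref{sqr_norm_direct_sum}, and the $d_\ell=0$ convention is appropriate and does not depart from the paper's line of reasoning.
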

\begin{proof}
	The proof relies on the relation \eqref{sqr_norm_direct_sum}. In view of that, we can rewrite the objective function of \eqref{opt_MKL_02} as 
	\begin{equation}\label{inner_min}
		\begin{aligned}
			\min \left\{ \frac{1}{2}\sum_{\ell} \frac{1}{d_\ell} \|f_\ell\|^2_{\Hbb_\ell} +C\|\ub_+\|_0 : f=\sum_{\ell} f_\ell \ \text{such that}\ f_\ell\in\Hbb_\ell' \right\}
		\end{aligned}
	\end{equation}
	where the $\ell_0$-norm term can be seen as a constant with respect to this inner minimization. 
	
	    Now suppose that the minimizer of \eqref{opt_MKL_02} is $(f^*, \db^*, b^*, \ub^*)$. We will show that it is also a minimizer of \eqref{opt_MKL_01}. The converse can be handled similarly and is omitted. 
	    Let the additive decomposition of $f$ that achieves the minimum in \eqref{inner_min} be $\sum_{\ell}\hat f_\ell$. Then subject to the feasibility conditions, we have
	    \begin{equation*}
			    		\frac{1}{2}\sum_{\ell} \frac{1}{d^*_\ell} \|\hat{f}^*_\ell\|^2_{\Hbb_\ell} +C\|\ub^*_+\|_0  \leq \frac{1}{2} \|f\|^2_{\Hbb} + C\|\ub_+\|_0 
			    		  \leq \frac{1}{2}\sum_{\ell} \frac{1}{d_\ell} \|f_\ell\|^2_{\Hbb_\ell} +C\|\ub_+\|_0,
		    \end{equation*}
		which means that $(\fb^*, \db^*, b^*)$ is a minimizer of \eqref{opt_MKL_01}.
\end{proof}

\section{Optimality Theory}\label{sec:optimality}

In this section, we give some theoretical results on the existence of an optimal solution to \eqref{opt_MKL_02} and equivalently, to \eqref{opt_MKL_01}, and some KKT-like first-order optimality conditions. 
Our standing assumption is that each kernel matrix  $K_\ell$ for the RKHS $\Hbb_\ell$ is positive definite as e.g., in the case of Gaussian kernels with different hyperparameters. We state this below formally.

\begin{assumption}\label{assump_posi_def_K_l}
	Given the data points $\{\xb_i : i\in \Nbb_m\}$, each $m\times m$ kernel matrix $K_\ell$, whose $(i, j)$ entry is $\kappa_\ell(\xb_i, \xb_j)$, is positive definite for $\ell\in\Nbb_L$.
\end{assumption}

The main results are given in the next two subsections.

\subsection{Existence of a Minimizer}\label{subsec:exist}

\begin{theorem}\label{thm_exist}
	Assume that the intercept $b$ takes value from the closed interval $\Ical:=[-M, M]$ where $M>0$ is a sufficiently large number. Then the optimization problem \eqref{opt_MKL_02} has a global minimizer and the set of all global minimizers is bounded.
\end{theorem}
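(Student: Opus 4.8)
The plan is to use the direct method of the calculus of variations, after neutralizing the two apparent difficulties — the infinite dimension of $\Hbb$ and the discontinuity of the $\ell_0$ penalty — so that a routine compactness argument applies. I would carry out the argument on the equivalent formulation \eqref{opt_MKL_01} (appealing to Lemma~\ref{lem_equiv_opt_01} to transfer the conclusion back to \eqref{opt_MKL_02}), because there the variables $(\fb,\db,b)=((f_1,\dots,f_L),\db,b)$ range over the \emph{fixed} space $\prod_\ell\Hbb_\ell\times\Rbb^L\times\Rbb$ and the penalty is $\tfrac12\sum_\ell\tfrac{1}{d_\ell}\|f_\ell\|^2_{\Hbb_\ell}$, with each $\Hbb_\ell$ independent of $\db$. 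Two structural facts will do the work: the feasible set of $(\db,b)$ is the compact product of the simplex $\Delta=\{\db\in\Rbb_+^L:\sum_\ell d_\ell=1\}$ with the given interval $[-M,M]$, and the objective is lower semicontinuous.

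First I would remove the infinite dimension by a representer-type argument applied kernel by kernel: decomposing $f_\ell=f_\ell^\parallel+f_\ell^\perp$ with $f_\ell^\parallel$ in the finite-dimensional subspace $\mathcal{V}_\ell:=\mathrm{span}\{\kappa_\ell(\cdot,\xb_i):i\in\Nbb_m\}$, the reproducing property gives $f_\ell(\xb_i)=f_\ell^\parallel(\xb_i)$ while $\|f_\ell\|^2_{\Hbb_\ell}\geq\|f_\ell^\parallel\|^2_{\Hbb_\ell}$; hence deleting $f_\ell^\perp$ leaves the $\ell_0$ term unchanged and cannot increase the penalty, so the infimum is the same when every $f_\ell$ is confined to $\mathcal{V}_\ell$, and \emph{every} minimizer must in fact have $f_\ell^\perp=0$. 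Parametrizing $f_\ell=\sum_i w^{(\ell)}_i\kappa_\ell(\cdot,\xb_i)$ yields a finite-dimensional problem in $(\wb^{(\ell)})_\ell\in(\Rbb^m)^L$, $\db\in\Delta$, $b\in[-M,M]$, with $\|f_\ell\|^2_{\Hbb_\ell}=(\wb^{(\ell)})^\top K_\ell\wb^{(\ell)}$ and $f_\ell(\xb_i)=(K_\ell\wb^{(\ell)})_i$. For semicontinuity I would note that the unit step $H$ is itself lower semicontinuous — its only jump, at the origin, is downward and $H(0)=0$ — so each $H(1-y_i(\sum_\ell f_\ell(\xb_i)+b))$, being $H$ composed with a continuous map, is lower semicontinuous; and that each smooth summand $\tfrac{1}{d_\ell}(\wb^{(\ell)})^\top K_\ell\wb^{(\ell)}$ is the \emph{perspective} of the convex quadratic $\wb\mapsto\wb^\top K_\ell\wb$, hence jointly convex and lower semicontinuous in $(\wb^{(\ell)},d_\ell)$ once the convention ${x}/{0}=0$ for $x=0$ is used.

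The coercivity estimate is where Assumption~\ref{assump_posi_def_K_l} enters. Write $J^\star$ for the infimum; it is finite and nonnegative, since $f_1=\dots=f_L=0$, $b=0$ (with any $\db\in\Delta$) gives objective value $Cm$. Along a minimizing sequence the penalty is eventually at most $J^\star+1$, so $\tfrac{1}{d_\ell}(\wb^{(\ell)})^\top K_\ell\wb^{(\ell)}\leq 2(J^\star+1)$; because $d_\ell\leq 1$ on the simplex this forces $(\wb^{(\ell)})^\top K_\ell\wb^{(\ell)}\leq 2(J^\star+1)$, and positive definiteness of $K_\ell$ then gives $\|\wb^{(\ell)}\|^2\leq 2(J^\star+1)/\lambda_{\min}(K_\ell)$. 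Thus the minimizing sequence stays in a compact set (closed balls $\times\,\Delta\times[-M,M]$); passing to a convergent subsequence and invoking lower semicontinuity shows the limit attains $J^\star$, which proves existence, and Lemma~\ref{lem_equiv_opt_01} delivers a minimizer of \eqref{opt_MKL_02}. Boundedness of the minimizer set rests on the same inequality: for \emph{any} global minimizer, $\tfrac{1}{d_\ell}\|f_\ell\|^2_{\Hbb_\ell}\leq 2J^\star$ gives $\|f_\ell\|^2_{\Hbb_\ell}\leq 2J^\star$ and $\|\wb^{(\ell)}\|^2\leq 2J^\star/\lambda_{\min}(K_\ell)$, while $\db\in\Delta$, $b\in[-M,M]$, and the induced slacks $u_i=1-y_i(\sum_\ell f_\ell(\xb_i)+b)$ are automatically bounded — all bounds being independent of the particular minimizer.

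The main obstacle I anticipate is not the infinite dimension, which the representer reduction disposes of cleanly, but the discontinuous $\ell_0$ term: the direct method needs lower semicontinuity, and the whole argument hinges on $H$ being semicontinuous in the \emph{correct} direction. A second delicate point is the behaviour as some $d_\ell\to 0$; it might seem the penalty could blow up or the variables could escape, but the simplex constraint $d_\ell\leq 1$ means a small $d_\ell$ actually \emph{tightens} the bound on $\wb^{(\ell)}$, so no pathology occurs — this is precisely where \eqref{d_l_simplex_02} and Assumption~\ref{assump_posi_def_K_l} are used. Finally, the hypothesis $b\in[-M,M]$ is genuinely necessary: without it a minimizing sequence could drift to $b\to\pm\infty$ while keeping the objective bounded, so the compactness in $b$ cannot be waived.
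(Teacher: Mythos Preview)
Your proof is correct and follows the same broad strategy as the paper --- reduce to finite dimensions, establish lower semicontinuity of the objective, obtain coercivity from Assumption~\ref{assump_posi_def_K_l}, and invoke Weierstrass --- but the execution differs in one point worth recording. The paper applies the semiparametric representer theorem to the \emph{combined} kernel $\sum_\ell d_\ell\kappa_\ell$, reducing to a single coefficient vector $\wb\in\Rbb^m$ with quadratic term $\tfrac12\wb^\top\Kcal(\db)\wb$; its boundedness argument then rests on $\lambda_{\min}(\Kcal(\db))>0$, which (implicitly) needs a uniform lower bound over the simplex. You instead apply the representer reduction kernel by kernel, parametrizing each $f_\ell$ by its own $\wb^{(\ell)}\in\Rbb^m$, and your coercivity bound uses only the fixed constants $\lambda_{\min}(K_\ell)$ together with the simplex inequality $d_\ell\leq 1$ --- a slightly more elementary route that sidesteps any uniformity question. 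The cost is a larger parameter space ($mL$ versus $m$ real coefficients) and the need to handle the boundary $d_\ell\to 0$ via the perspective-function lower semicontinuity, which the paper's single-$\wb$ formulation with $\Kcal(\db)$ avoids automatically.
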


\begin{proof}
	It is not difficult to argue that the problem \eqref{opt_MKL_01} is equivalent to 
	\begin{equation}\label{opt_MKL_02.1}
	\begin{aligned}
	& \underset{\db\in\Rbb^L}{\min} 
	& & \left\{ \underset{\substack{f\in\Hbb\\ b\in\Rbb}}{\min}\ \frac{1}{2} \|f\|^2_{\Hbb} + C \sum_{i} \|\left[1-y_i(f(\xb_i) +b)\right]_+\|_0 \right\} \\
	& \text{s.t.}
	& & \eqref{d_l_simplex_01}\ \text{and}\ \eqref{d_l_simplex_02}.
	\end{aligned}
	\end{equation}
	The inner minimization problem can be viewed as unconstrained once a feasible $\db$ is fixed. Hence we can	employ the {semiparametric representer theorem}  to conclude that the optimal $f$ of the inner optimization has the form 
	\begin{equation}\label{multi-kernel_representer}
	\begin{aligned}
	f(\xb) & = \sum_{\ell} f_\ell(\xb)  
	= \sum_{\ell} d_\ell \sum_{i} w_{i} \kappa_\ell(\xb,\xb_i),
	\end{aligned}
	\end{equation}
	where $f_\ell$ belongs to the RKHS $\Hbb_\ell'$ with a kernel $d_\ell \kappa_\ell(\cdot,\cdot)$ for each $\ell\in\Nbb_L$, and the parameter vector $\wb=[w_1,\dots,w_m]\in\Rbb^m$ .
	In view of this and \eqref{sqr_norm_direct_sum}, the optimization problem \eqref{opt_MKL_02.1} is further equivalent to
	\begin{equation}\label{opt_MKL_02.22}
	\begin{aligned}
	& \underset{\substack{\wb\in\Rbb^m,\ \db\in\Rbb^L \\ b\in\Rbb}}{\min} 
	& & \frac{1}{2} \wb^\top \Kcal(\db) \wb + C \|(\oneb - \Acal(\db)\wb -b\yb)_+\|_0 \\
	& \qquad\text{s.t.}
	& & \eqref{d_l_simplex_01}\ \text{and}\ \eqref{d_l_simplex_02},
	\end{aligned}
	\end{equation}
	where 
	$\Kcal(\db) := \sum_{\ell} d_\ell K_\ell$ with $K_\ell$ the kernel matrix corresponding to $\kappa_\ell(\cdot, \cdot)$, and 
	$\Acal(\db):=D_{\yb} \Kcal(\db)$.
	Let us write the objective function as	
	\begin{equation}\label{objective_finite_dim}
	J(\wb, \db, b) := \frac{1}{2}\textbf{w}^\top \Kcal(\db)\textbf{w} + C\sum_{i} \left\| \left[(\oneb - \Acal(\db)\wb -b\yb)_i \right]_+\right\|_0.
	\end{equation}
	It is obvious that the minimum
	value of $J(\wb, \db, b)$ is no larger than $J(\zerob, \oneb/L, 0)= Cm$, where $m$ is the maximum value of the $\ell_0$ norm for vectors in $\mathbb{R}^m$. We can then consider the optimization
	problem \eqref{opt_MKL_02.22} on the nonempty sublevel set
	\begin{equation}
		S:=\{(\wb, \db, b)\in\mathbb{R}^m\times\mathcal{I}\times\mathbb{R}^L : J(\wb, \db, b)\le Cm\}.
	\end{equation}
	We will prove that $S$ is a compact set. Due to the finite dimensionality, it amounts to showing that $S$ is closed
	and bounded. To this end, notice that the second term in the objective function \eqref{objective_finite_dim}
	is lower-semicontinuous because so is the function $\|(\cdot)_+\|_0$ on $\mathbb{R}$ and the argument $(\oneb - \Acal(\db)\wb -b\yb)_i$ is smooth in $(\wb, \db, b)$. 
	We can now conclude that $J(\wb, \db, b)$ is also lower-semicontinuous
	since the first quadratic term is smooth. Consequently,
	the sublevel set $S$ is closed. The fact that $S$ is also
	bounded follows from the argument that if we allow
	$\|\wb\|\to\infty$, then
	\begin{equation}
		J(\wb, \db, b)\ge \frac{1}{2}\textbf{w}^\top \Kcal(\db) \textbf{w}\ge \frac{1}{2}\lambda_{\min}(\Kcal(\db))\|\textbf{w}\|^2 \to \infty,
	\end{equation}
	where $\lambda_{\min}(\Kcal(\db))
	>0$ is the smallest eigenvalues of $\Kcal(\db)$. Therefore, the sublevel set $S$ is compact and the existence of a global minimizer follows from the extreme
	value theorem of Weierstrass. The set of all global minimizers is a subset of $S$ and is automatically bounded.
\end{proof}

\subsection{Characterization of Global and Local Minimizers}\label{subsec:optim_cond}


For the sake of consistency, let us rewrite \eqref{opt_MKL_01} in the following way:
\begin{subequations}\label{inf_opt_MKL_01.1}
	\begin{align}
		& \underset{\substack{\fb\in \Fbb,\ \db\in\Rbb^L\\ b\in\Rbb,\ \ub\in\Rbb^m}}{\min}
		& & \frac{1}{2} \sum_{\ell} \frac{1}{d_\ell} \|f_\ell\|^2_{\Hbb_\ell} + C \|\ub_+\|_0 \label{obj_func_J} \\
		& \quad\ \text{s.t.}
		& & \eqref{d_l_simplex_01},\ \eqref{d_l_simplex_02},\ \text{and}\ \eqref{equal_constraint_u}, \nonumber 
	\end{align}
\end{subequations}
where the last equality constraint \eqref{equal_constraint_u} is obviously \emph{affine} in the ``variables'' $(\fb, b, \ub)$, and we have written $f=\sum_{\ell} f_\ell$ for simplicity.
Before stating the optimality conditions, we need a generalized definition of a stationary point in nonlinear programming.

\begin{definition}[P-stationary point of \eqref{inf_opt_MKL_01.1}]\label{def_P1_stationary}
	Fix a regularization parameter  $C>0$. We call $(\fb^*, \db^*, b^*, \ub^*)$ a proximal stationary $($abbreviated as P-stationary$)$ point of \eqref{inf_opt_MKL_01.1} if there exists a vector $(\thetab^*, \alpha^*, \lambdab^*) \in \Rbb^{L+1+m}$ and a number $\gamma>0$ such that
	\begin{subequations}\label{inf_P-stationary_cond}
		\begin{align}
			d_\ell^\ast & \ge 0, \ \ell\in\Nbb_L \label{inf_primal_constraint_01} \\
			\sum_{\ell} d_\ell^\ast & = 1 \label{inf_primal_constraint_02} \\
			u_i^* + y_i \left( f^*(\xb_i) +b^* \right) & =1,\ i\in\Nbb_m \label{inf_primal_constraint_03} \\ 
			\theta_\ell^\ast & \ge 0, \ \ell\in\Nbb_L \label{inf_dual_constraint} \\
			\theta_\ell^\ast d_\ell^\ast & = 0, \ \ell\in\Nbb_L \label{inf_complem_slackness} \\
			\forall \ell\in\Nbb_L,\quad \frac{1}{d_\ell^\ast}f_\ell^\ast(\cdot) & =-\sum_{i}\lambda_i^* y_i \kappa_\ell(\,\cdot\,, \xb_i) \label{inf_stationary_cond_Lagrangian_f} \\
			-\frac{1}{2(d_\ell^\ast)^2}\|f_\ell^\ast\|^2_{\Hbb_\ell}+\alpha^\ast-\theta_\ell^\ast & = 0, \ \ell\in\Nbb_L \label{inf_stationary_cond_Lagrangian_d} \\
			\yb^\top \lambdab^* & = 0 \label{inf_stationary_cond_Lagrangian_b} \\
			\prox_{\gamma C \|(\cdot)_+\|_0} (\ub^\ast-\gamma \lambdab^\ast) & = \ub^\ast, \label{inf_stationary_cond_Lagrangian_u}
		\end{align}
	\end{subequations}
	where the proximal operator is defined as
	\begin{equation}\label{prox_def}
	\prox_{\gamma C\stepfunc} (\zb) := \underset{\vb\in\Rbb^m}{\argmin} \quad C\|\vb_+\|_0 + \frac{1}{2\gamma} \|\vb-\zb\|^2.
	\end{equation}
	\end{definition}

	It can be shown \citep{wang2021support} that the scalar version of the proximal operator above has a closed-form solution
\begin{equation}\label{prox_op_scalar}
	\prox_{\gamma C\stepfunc} (z) =
	\left\{\begin{aligned}
		0, & \quad 0<z\leq \sqrt{2\gamma C} \\
		z, & \quad z>\sqrt{2\gamma C} \ \text{or} \ z\leq 0,
	\end{aligned}\right.
\end{equation}
see also Fig.~\ref{fig:prox_operat}.
The vector version of the the proximal operator 
 is evaluated by a componentwise application of \eqref{prox_op_scalar}, that is,
\begin{equation}\label{prox_op_vec}
	[\prox_{\gamma C\stepfunc} (\zb)]_i = \prox_{\gamma C\stepfunc} (z_i)
\end{equation}
for $\zb\in\Rbb^m$
because the objective function on the right-hand side of \eqref{prox_def} admits an additive componentwise decomposition.
Formula \eqref{prox_op_vec} is called ``$L_{0/1}$ proximal operator'' in \cite{wang2021support}.
\begin{figure}
	\centering
	\begin{tikzpicture}
		\draw[line width=0.5pt][->](-1.8,0)--(2,0)node[left,below,font=\tiny]{$z$};
		\draw[line width=0.5pt][->](0,-1.5)--(0,2);
		\node[below,font=\tiny] at (0.09,0.1){0};
		\draw[color=red,thick,smooth][-](-1.2,-1.2)--(0,0);
		\draw[color=red,thick,smooth][-](0,0)--(0.8,0);
		\draw[color=red,fill=red,smooth](0.8,0)circle(0.03);
		\node[left,below,font=\tiny]at(0.8,0){$_{\sqrt{2\gamma C}}$};
		\draw[color=red][dashed] (0.8,0)--(0.8,0.8); 
		\draw[color=red,thick,smooth][-](0.8,0.8)--(1.8,1.8);
		\node[above,font=\tiny] at(1.2,1.7) {$\quad\mathrm{prox}_{\gamma C\|(\cdot)_+\|_0}(z)$};
	\end{tikzpicture}
	\caption{The $L_{0/1}$ proximal operator on the real line.}
	\label{fig:prox_operat}
\end{figure}
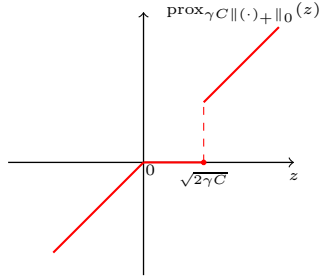

The elements of $(\thetab^*, \alpha^*, \lambdab^*)$ in Definition~\ref{def_P1_stationary} can be understood as
\emph{Lagrange multipliers} since they correspond to similar quantities in a smooth SVM problem 
\citep{cortes1995support}. However, here a dual problem seems difficult to derive because of the nonsmooth nonconvex function $\|(\cdot)_+\|_0$.
The set of equations and inequalities \eqref{inf_P-stationary_cond} are interpreted as \emph{KKT-like} optimality conditions for the optimization problem \eqref{inf_opt_MKL_01.1}, where \eqref{inf_primal_constraint_01}, \eqref{inf_primal_constraint_02}, and \eqref{inf_primal_constraint_03} are the primal constraints, \eqref{inf_dual_constraint} the dual constraints, \eqref{inf_complem_slackness} the complementary slackness, and \eqref{inf_stationary_cond_Lagrangian_f}, \eqref{inf_stationary_cond_Lagrangian_d}, \eqref{inf_stationary_cond_Lagrangian_b}, and \eqref{inf_stationary_cond_Lagrangian_u} the stationarity conditions of the Lagrangian with respect to the primal variables. 
We must point out that the nonsmoothness of the problem \eqref{inf_opt_MKL_01.1} is only found in the term $\|\ub_+\|_0$, and the corresponding stationarity condition \eqref{inf_stationary_cond_Lagrangian_u} 
is given in terms of the proximal operator.

The following theorem connects the optimality conditions for \eqref{inf_opt_MKL_01.1} to P-stationary points.

\begin{theorem}\label{thm_optimality}
	The global and local minimizers of \eqref{inf_opt_MKL_01.1} admit the following characterizations:	
	\begin{enumerate}[label=(\arabic*)]
		\item A global minimizer is a P-stationary point with $0<\gamma< C_1 $,
		where the positive number
		\begin{equation*}
			C_1 = \min\left\{\lambda_{\min}(\Kcal(\db)) : \db\ \text{satisfies} \ \eqref{d_l_simplex_01}\ \text{and}\ \eqref{d_l_simplex_02}\right\}
		\end{equation*}
		in which $\lambda_{\min}(\cdot)$ denotes the smallest eigenvalue of a matrix.
		
		\item Any P-stationary point $($with $\gamma>0$$)$ is also a local minimizer of \eqref{inf_opt_MKL_01.1}.
	\end{enumerate}
\end{theorem}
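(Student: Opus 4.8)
The plan is to prove the two assertions separately, in both cases passing to the finite-dimensional reduction \eqref{opt_MKL_02.22} obtained in the proof of Theorem~\ref{thm_exist} and exploiting the closed form \eqref{prox_op_scalar} of the proximal operator. Throughout I write $D_{\yb}=\diag(\yb)$ and use that, under Assumption~\ref{assump_posi_def_K_l}, the matrix $\Kcal(\db)=\sum_\ell d_\ell K_\ell$ is positive definite for every $\db$ on the simplex \eqref{d_l_simplex_01}--\eqref{d_l_simplex_02} (a convex combination of positive (semi)definite matrices with at least one strictly positive weight), so $\Acal(\db)=D_{\yb}\Kcal(\db)$ is invertible. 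The representer identity $f_\ell^\ast=d_\ell^\ast\sum_i w_i^\ast\kappa_\ell(\cdot,\xb_i)$ lets me pass between $f_\ell^\ast$ and a finite weight vector $\wb^\ast$, and I will identify the multiplier as $\lambdab^\ast=-D_{\yb}\wb^\ast$.

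For assertion (1), let $(\fb^\ast,\db^\ast,b^\ast,\ub^\ast)$ be a global minimizer; feasibility \eqref{inf_primal_constraint_01}--\eqref{inf_primal_constraint_03} is immediate. First I freeze $\db=\db^\ast$ and $b=b^\ast$. Using $\ub=\oneb-\Acal(\db^\ast)\wb-b^\ast\yb$ to eliminate $\wb$, the objective becomes $g(\ub)=\tfrac12(\ub-\cb)^\top Q(\ub-\cb)+C\|\ub_+\|_0$ with $\cb=\oneb-b^\ast\yb$ and $Q=D_{\yb}\Kcal(\db^\ast)^{-1}D_{\yb}$, whose largest eigenvalue is $1/\lambda_{\min}(\Kcal(\db^\ast))$. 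Setting $\lambdab^\ast:=Q(\ub^\ast-\cb)$, a short computation gives $\lambdab^\ast=-D_{\yb}\wb^\ast$, which is exactly \eqref{inf_stationary_cond_Lagrangian_f}; freeing $b$ and setting $\partial_b=0$ yields $\yb^\top\lambdab^\ast=0$, i.e. \eqref{inf_stationary_cond_Lagrangian_b}. Since $\ub^\ast$ globally minimizes $g$, a second-order expansion of the quadratic together with $g(\vb)\ge g(\ub^\ast)$ shows that $\ub^\ast$ is in fact the global minimizer of the proximal objective $C\|\vb_+\|_0+\tfrac1{2\gamma}\|\vb-(\ub^\ast-\gamma\lambdab^\ast)\|^2$ whenever $\tfrac1{2\gamma}\|\cdot\|^2$ dominates $\tfrac12(\cdot)^\top Q(\cdot)$, i.e. whenever $\gamma<\lambda_{\min}(\Kcal(\db^\ast))$; as $0<\gamma<C_1\le\lambda_{\min}(\Kcal(\db^\ast))$, this is precisely \eqref{inf_stationary_cond_Lagrangian_u}. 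Finally, freezing $(\fb^\ast,\ub^\ast,b^\ast)$ and minimizing the convex map $\db\mapsto\tfrac12\sum_\ell\|f_\ell^\ast\|_{\Hbb_\ell}^2/d_\ell$ over the simplex produces, by standard KKT for that smooth convex problem, multipliers $\alpha^\ast$ and $\theta_\ell^\ast\ge0$ with $\theta_\ell^\ast d_\ell^\ast=0$ and $-\tfrac1{2(d_\ell^\ast)^2}\|f_\ell^\ast\|_{\Hbb_\ell}^2+\alpha^\ast-\theta_\ell^\ast=0$, i.e. \eqref{inf_dual_constraint}--\eqref{inf_complem_slackness} and \eqref{inf_stationary_cond_Lagrangian_d}.

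For assertion (2), let $(\fb^\ast,\db^\ast,b^\ast,\ub^\ast)$ be P-stationary for some $\gamma>0$ and take a feasible $(\fb,\db,b,\ub)$. The quadratic-over-linear map $P(\fb,\db)=\tfrac12\sum_\ell\|f_\ell\|_{\Hbb_\ell}^2/d_\ell$ is jointly convex, so $P(\fb,\db)-P(\fb^\ast,\db^\ast)$ is bounded below by its first-order part; substituting \eqref{inf_stationary_cond_Lagrangian_f} and \eqref{inf_stationary_cond_Lagrangian_d}, using $\sum_\ell(d_\ell-d_\ell^\ast)=0$ and $\sum_\ell\theta_\ell^\ast d_\ell\ge0$ together with \eqref{inf_complem_slackness}, then eliminating $f(\xb_i)$ through \eqref{inf_primal_constraint_03} and invoking \eqref{inf_stationary_cond_Lagrangian_b}, this lower bound collapses to $\innerprod{\lambdab^\ast}{\ub-\ub^\ast}$. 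It therefore suffices to show $\innerprod{\lambdab^\ast}{\ub-\ub^\ast}+C(\|\ub_+\|_0-\|\ub^\ast_+\|_0)\ge0$ for $\ub$ near $\ub^\ast$. Reading \eqref{inf_stationary_cond_Lagrangian_u} componentwise via \eqref{prox_op_scalar} splits the indices into a set $T$ with $u_i^\ast=0$ and $\lambda_i^\ast\in[-\sqrt{2C/\gamma},0)$, and its complement where $\lambda_i^\ast=0$ and $u_i^\ast\notin(0,\sqrt{2\gamma C}]$. On the complement the summand either vanishes (for $u_i^\ast\neq0$ the count $\|(u_i)_+\|_0$ is locally constant) or is nonnegative; on $T$ it equals $\lambda_i^\ast u_i+C\|(u_i)_+\|_0$, which is $\ge0$ when $u_i\le0$ and, for $0<u_i\le\sqrt{C\gamma/2}$, is at least $-\sqrt{2C/\gamma}\,u_i+C\ge0$. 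Hence for $\ub$ in a ball of radius $\sqrt{C\gamma/2}$ (intersected with neighborhoods keeping each $u_i$ with $u_i^\ast\neq0$ on its side) the whole sum is nonnegative, giving $J(\fb,\db,b,\ub)\ge J(\fb^\ast,\db^\ast,b^\ast,\ub^\ast)$ and hence local optimality.

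The routine differentiations and the eigenvalue bound in (1) are standard; the genuine difficulty is the nonsmooth term in (2). The crux is that the proximal fixed-point equation \eqref{inf_stationary_cond_Lagrangian_u} encodes exactly the sign and magnitude information ($\lambda_i^\ast\in[-\sqrt{2C/\gamma},0)$ on the active set, $\lambda_i^\ast=0$ off it) needed to offset the unit jumps of $\|(\cdot)_+\|_0$ at the coordinates where $u_i^\ast=0$, and that this offset persists only in a neighborhood whose radius is controlled by $\gamma$ and $C$. A secondary technical point, to be handled with the convention $x/0=0$, is the behaviour of $P$ and of \eqref{inf_stationary_cond_Lagrangian_d} at coordinates with $d_\ell^\ast=0$, where $f_\ell^\ast=0$ and the convexity subgradient inequality must be read on the closed orthant.
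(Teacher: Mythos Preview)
Your proposal is correct and, for part~(1), matches the paper's argument closely: both freeze subsets of variables at the global optimum to read off the smooth KKT conditions \eqref{inf_primal_constraint_01}--\eqref{inf_stationary_cond_Lagrangian_b} from convex duality, and then verify the proximal fixed point \eqref{inf_stationary_cond_Lagrangian_u} by majorizing the quadratic $\tfrac12\langle\cdot,Q\cdot\rangle$ by $\tfrac{1}{2\gamma}\|\cdot\|^2$, which succeeds exactly when $\gamma<\lambda_{\min}(\Kcal(\db^\ast))$. The only organizational difference is that the paper fixes $\ub^\ast$ and treats the remaining $(\fb,\db,b)$-problem as one jointly convex program (citing \citet{rakotomamonjy2008simplemkl}) rather than freezing variables in stages as you do.

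For part~(2) your route is slightly more direct than the paper's. Both arguments reduce, via the joint convexity of $P(\fb,\db)=\tfrac12\sum_\ell\|f_\ell\|_{\Hbb_\ell}^2/d_\ell$ and the stationary conditions, to showing $\langle\lambdab^\ast,\ub-\ub^\ast\rangle+C(\|\ub_+\|_0-\|\ub^\ast_+\|_0)\ge0$ locally. The paper then splits the neighborhood into two cases according to whether $u_i\le0$ for all $i$ in the active set $\Gamma_\ast=\{i:u_i^\ast=0\}$; in Case~1 it uses $\lambda_i^\ast<0$ directly, while in Case~2 it invokes a separate local Lipschitz bound $|P(\fb,\db)-P(\fb^\ast,\db^\ast)|\le C$ to offset one extra unit of the $\ell_0$ count. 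Your componentwise inequality $\lambda_i^\ast u_i+C\|(u_i)_+\|_0\ge0$ on the active set, valid for $|u_i|\le\sqrt{C\gamma/2}$, covers both cases uniformly, dispenses with the Lipschitz step, and yields an explicit neighborhood radius in terms of $C$ and $\gamma$.
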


\begin{proof}
	To prove the first assertion, let $(\fb^*, \db^*, b^*, \ub^*)$ be a global minimizer of \eqref{inf_opt_MKL_01.1}. If $\ub^*$ is held fixed, then trivially, $(\fb^*, \db^*, b^*)$ must be a minimizer of the corresponding optimization problem with respect to the remaining variables $(\fb, \db, b)$. In other words, we have
	\begin{equation}\label{opt_MKL_01.1_fix_u}
	\begin{aligned}
	(\fb^*, \db^*, b^*) = & \underset{\substack{\fb\in \Fbb,\ \db\in\Rbb^L\\ b\in\Rbb}}{\argmin}
	& & \frac{1}{2} \sum_{\ell} \frac{1}{d_\ell} \|f_\ell\|^2_{\Hbb_\ell} + C \|\ub^*_+\|_0 \\
	& \ \quad\text{s.t.}
	& & \eqref{d_l_simplex_01}\ \text{and}\ \eqref{d_l_simplex_02} \\
	& & & u^*_i + y_i \left( f(\xb_i) +b \right) =1\ \forall i
	\end{aligned}
	\end{equation}
	Since the term $C \|\ub^*_+\|_0$ now becomes a fixed constant, the above problem turns into a \emph{smooth convex} optimization problem, thanks to the result in  \citet[Appendix~A.1]{rakotomamonjy2008simplemkl}. Moreover, Slater's condition clearly holds so we have \emph{strong duality}. Consequently, conditions \eqref{inf_primal_constraint_01} through \eqref{inf_stationary_cond_Lagrangian_b} hold as KKT conditions for \eqref{opt_MKL_01.1_fix_u}.

	It now remains to show the last condition \eqref{inf_stationary_cond_Lagrangian_u}. This time we fix $\db^*$ and consider the minimization with respect to the other three variables. More precisely, we shall consider the  formulation equivalent to \eqref{opt_MKL_02.22} with the additional variable $\ub$:
	\begin{equation}\label{opt_MKL_02.3}
		\begin{aligned}
			(\wb^*, b^*, \ub^*) = & \underset{\substack{\wb\in\Rbb^m,\ b\in\Rbb \\ \ub\in\Rbb^m}}{\argmin} 
			& & \frac{1}{2} \wb^\top \Kcal(\db^*) \wb + C \|\ub_+\|_0 \\
			& \quad\ \text{s.t.}
			& & \ub + \Acal(\db^*)\wb + b\yb = \oneb.
		\end{aligned}
	\end{equation}
	Since the matrix $\Acal(\db^*)=D_{\yb}\Kcal(\db^*)$ is invertible by Assumption~\ref{assump_posi_def_K_l}, we can write $\wb=\Acal(\db^*)^{-1} (\oneb-\ub-b\yb)$ and convert the optimization problem to another unconstrained version:
	\begin{equation}\label{optim_unconstrain_u_b}
		(\ub^*, b^*) = \underset{\substack{\ub\in\Rbb^m, \, b\in\Rbb}}{\argmin}\ \  g(\ub,b) + C \|\ub_+\|_0
	\end{equation}
	where $g(\ub,b):=  \frac{1}{2} (\oneb-\ub-b\yb)^\top \Kcal_1(\db^*)^{-1} (\oneb-\ub-b\yb)$ is a new quadratic term such that the matrix $\Kcal_1(\db) := D_{\yb} \Kcal(\db) D_{\yb}$. The following computation is straightforward: 
	\begin{equation}\label{grad_Hess_g}
		\begin{aligned}
			\nabla g(\ub,b) & = \bmat \nabla_{\ub}\, g(\ub,b) \\ \nabla_b\, g(\ub,b)\emat = - \bmat I \\ \yb^\top \emat \Kcal_1(\db^*)^{-1} (\oneb-\ub-b\yb), \\
			\nabla^2 g(\ub,b) & = \bmat I \\ \yb^\top \emat \Kcal_1^{-1}(\db^*) \bmat I & \yb \emat.
		\end{aligned}
	\end{equation}
	Next define 
	\begin{equation}\label{def_lambda*}
		\begin{aligned}
			\lambdab^* & := -\Kcal_1(\db^*)^{-1} (\oneb-\ub^*-b^*\yb) 
			 = -\Kcal_1(\db^*)^{-1} \Acal(\db^*) \wb^* = -D_{\yb} \wb^*.
		\end{aligned}
	\end{equation}
	Take an arbitrary number $\gamma\in(0, C_1)$, and define a vector $\zb:=\prox_{\gamma C\stepfunc} (\ub^* - \gamma\lambdab^*)$. We need to prove $\zb=\ub^*$ which is the last equation in \eqref{inf_P-stationary_cond}.
	To this end, we shall emphasize three points:
	
	\begin{itemize}
		\item[(i)] By global optimality of $(\ub^*,b^*)$, we have 
		\begin{equation}
			g(\ub^*,b^*) +C\|\ub^*_+\|_0 \leq g(\zb,b^*) +C\|\zb_+\|_0.
		\end{equation}
		\item[(ii)] Using second-order Taylor expansion for $g(\ub,b)$, it is not difficult to show 
		\begin{equation}
			g(\zb,b^*) - g(\ub^*,b^*) \leq (\lambdab^*)^\top (\zb-\ub^*) + \frac{\lambda_1}{2} \|\zb-\ub^*\|^2
		\end{equation}
		where $\lambda_1:=\lambda_{\max}(\Kcal_1(\db^*)^{-1})=1/\lambda_{\min}(\Kcal(\db^*))$.
		\item[(iii)] By the definition of the proximal operator \eqref{prox_def}, we have
		\begin{equation}
			\begin{aligned}
				& C \|\zb_+\|_0 + \frac{1}{2\gamma} \|\zb-(\ub^*-\gamma\lambdab^*)\|^2 
				\leq  C \|\ub^*_+\|_0 + \frac{1}{2\gamma} \|\gamma\lambdab^*\|^2 = C \|\ub^*_+\|_0 + \frac{\gamma}{2} \|\lambdab^*\|^2.
			\end{aligned}
		\end{equation}
	\end{itemize}
	Combining the above points via a chain of inequalities similar to \citet[Eq.~(S12), supplementary material]{wang2021support}, one can arrive at
	\begin{equation}
		0\leq \frac{\lambda_1 - 1/\gamma}{2} \|\zb-\ub^*\|^2 \leq 0
	\end{equation}
	where the constant term in the middle is negative since we have chosen $0<\gamma<C_1\leq \lambda_{\min}(\Kcal(\db^*)=1/\lambda_1$. Therefore, we conclude that $\zb=\ub^*$.

To prove the second part of the assertion, let us introduce the symbol $\phi=(\fb, \db,b, \textbf{u})$ for convenience. Suppose now that we have a P-stationary point $\phi^*=(\fb^*, \db^*,b^*, \ub^*)$ with an associated vector $(\thetab^*, \alpha^*, \lambdab^*)$ and $\gamma>0$.
Moreover, let $U(\phi^*, \delta_1)$ be a sufficiently small neighborhood of $\phi^*$ of radius $\delta_1$ such that for any $\phi=(\fb, \db,b, \textbf{u})\in U(\phi^*,\delta_1)$ we have $u^*_i\neq 0\implies u_i\neq 0$. That is to say: a small perturbation of $\textbf{u}^*$ does not change the signs of its nonzero components. As a consequence, the relation
\begin{equation}\label{local_u}
	\|\textbf{u}_+\|_0\geq\|\textbf{u}^*_+\|_0
\end{equation}
holds in that neighborhood. Next, notice that the first term $\frac{1}{2}\sum_{\ell} \frac{1}{d_\ell}\|f_\ell\|^2_{\Hbb_\ell}$ in  the objective function \eqref{obj_func_J} is smooth  in $(\fb, \textbf{d})$, which implies that it is locally Lipschitz continuous. Consequently, there 
exists a neighborhood  $U(\phi^*, \delta_2)$ such that for any $\phi\in U(\phi^*, \delta_2)$, we have $\left|\frac{1}{2}\sum_{\ell}\frac{1}{d_\ell}\|f_\ell\|^2_{\Hbb_\ell}-\frac{1}{2}\sum_{\ell}\frac{1}{d_\ell^\ast}\|f_\ell^\ast\|^2_{\Hbb_\ell}\right|\le C$ which further implies that
\begin{equation}\label{local_inequal_1}
	\frac{1}{2}\sum_{\ell}\frac{1}{d_\ell^\ast}\|f_\ell^\ast\|^2_{\Hbb_\ell}-C\le \frac{1}{2}\sum_{\ell} \frac{1}{d_\ell}\|f_\ell\|^2_{\Hbb_\ell}.
\end{equation}
Now, take $\delta=\min\left\{\delta_1,\delta_2\right\}>0$ and consider the feasible region
\begin{equation}
	\begin{aligned}
		\Theta := \{\phi = (\fb, \db,b, \textbf{u}) : 
		\eqref{d_l_simplex_01}, \eqref{d_l_simplex_02},\ \text{and} \ \eqref{equal_constraint_u}\ \text{hold}
		 \}
	\end{aligned}	
\end{equation}
of the problem \eqref{inf_opt_MKL_01.1}. We will show that the P-stationary point $\phi^\ast$ is locally optimal in $\Theta\cap U(\phi^*,\delta)$, that is, $\phi\in\Theta\cap U(\phi^*,\delta)$ implies the inequality
\begin{equation}\label{local_optim}
	\frac{1}{2}\sum_{\ell} \frac{1}{d_\ell^\ast}\|f_\ell^\ast\|^2_{\Hbb_\ell}+C\left\|\textbf{u}^\ast_+\right\|_0 \le \frac{1}{2}\sum_{\ell}\frac{1}{d_\ell}\|f_\ell\|^2_{\Hbb_\ell}+C\left\|\textbf{u}_+\right\|_0.
\end{equation}
For this purpose, let $\Gamma_*:=\{i : u^*_i=0\}$ and $\overline{\Gamma}_* := \mathbb{N}_m\backslash \Gamma_*$.
Then by 
\eqref{inf_stationary_cond_Lagrangian_u} and the evaluation formulas for the proximal operator (see \eqref{prox_op_scalar}and \eqref{prox_op_vec}), we have the relation:
\begin{equation}\label{relation_Gamma_star}
	\begin{cases}
		i\in\Gamma_*,\ u_i^*=0,\ -\sqrt{2C/\gamma} \leq \lambda^*_i <0, \\
		i\in\overline{\Gamma}_*, \ u_i^*\neq 0, \ \lambda^*_i=0.
	\end{cases}
\end{equation}
Consider the subset $\Theta_1:= \Theta \cap \{\phi : u_i\leq 0 \ \forall i\in\Gamma_*\}$ of $\Theta$, and the partition $\Theta=\Theta_1\cup (\Theta\backslash\Theta_1)$. We will split the discussion into two cases given such a partition of the feasible region.

\emph{Case 1}: $\phi\in\Theta_1\cap U(\phi^*,\delta)$. We emphasize two conditions:
\begin{equation}\label{two_conds}
	u_i\le 0 \ \ \text{for}\ \ i\in \Gamma_* \quad  \text{and} \quad u_i+y_i \left(\sum_{\ell} f_\ell(\textbf{x}_i)+b\right) =1.
\end{equation}
The following chain of inequalities hold:
\begin{subequations}\label{local_f}
	\begin{align}
			&\frac{1}{2}\sum_{\ell}\frac{1}{d_\ell}\|f_\ell\|^2_{\Hbb_\ell}-\frac{1}{2}\sum_{\ell}\frac{1}{d_\ell^\ast}\|f_\ell^\ast\|^2_{\Hbb_\ell} \nonumber \\
			\ge &\, \frac{1}{2}\sum_{\ell} \left[\frac{1}{d_\ell^\ast} \langle 2f_\ell^\ast,f_\ell-f_\ell^\ast\rangle - \frac{d_\ell-d_\ell^\ast}{(d_\ell^\ast)^2} \|f_\ell^\ast\|^2_{\Hbb_\ell} \right] \label{norm_inequal_01} \\
			= & -\sum_{i}\lambda_i^\ast y_i \left[\sum_{\ell}f_\ell(\textbf{x}_i)-\sum_{\ell}f_\ell^\ast(\textbf{x}_i)\right]
			+\sum_{\ell}(\theta_\ell^\ast-\alpha^\ast)(d_\ell-d_\ell^\ast) \label{norm_inequal_02} \\
			= & \sum_{i}\lambda_i^\ast(u_i-u_i^\ast)+\sum_{\ell}\theta_\ell^\ast d_\ell-\sum_{\ell}\theta_\ell^\ast d_\ell^\ast \label{norm_inequal_03} \\
			\ge & \sum_{i}\lambda_i^\ast(u_i-u_i^\ast)
			=\sum_{i\in \Gamma_*}\lambda^\ast_{i} u_{i}\ge 0, \label{norm_inequal_04}
	\end{align} 
\end{subequations} 
where, 
\begin{itemize}
	\item \eqref{norm_inequal_01} is the first order condition for convexity of each term $\frac{1}{d_\ell}\|f_\ell\|^2_{\Hbb_\ell}$,
	\item \eqref{norm_inequal_02} comes from the conditions \eqref{inf_stationary_cond_Lagrangian_f} and \eqref{inf_stationary_cond_Lagrangian_d} for the P-stationary point and the kernel trick, 
	\item \eqref{norm_inequal_03} results from the feasibility condition on the right of \eqref{two_conds}, the fact that $\alpha^*$ is independent of the dummy variable $\ell$, and the condition \eqref{d_l_simplex_02},
	\item and \eqref{norm_inequal_04} is a consequence of \eqref{relation_Gamma_star} and the condition on the left of \eqref{two_conds}.
\end{itemize}
Combining \eqref{local_u} and \eqref{local_f}, we obtain \eqref{local_optim}.

\emph{Case 2}: $\phi\in (\Theta\backslash\Theta_1)\cap U(\phi^*,\delta)$. Note that $\phi\in\Theta\backslash\Theta_1$ means that there exists some $i\in\Gamma_*$ such that $u_i>0$ while $u_i^*=0$ (by the definition of $\Gamma_*$). Then we have $\|\textbf{u}_+\|_0\geq\|\textbf{u}^*_+\|_0+1$. Combining this with \eqref{local_inequal_1}, we have
\begin{equation}
	\begin{aligned}
		\frac{1}{2}\sum_{\ell} \frac{1}{d_\ell^\ast}\|f_\ell^\ast\|^2_{\Hbb_\ell}+C\left\|\textbf{u}^\ast_+\right\|_0 & 
		\le\ \frac{1}{2}\sum_{\ell}\frac{1}{d_\ell^\ast}\|f_\ell^\ast\|^2_{\Hbb_\ell}+C\left\|\textbf{u}_+\right\|_0-C \\
		 & \le \frac{1}{2}\sum_{\ell} \frac{1}{d_\ell}\|f_\ell\|^2_{\Hbb_\ell}+C\left\|\textbf{u}_+\right\|_0,
	\end{aligned}
\end{equation}
as desired. This completes the proof of local optimality of $\phi^*$. 
\end{proof}

\section{Algorithm Design}\label{sec:algorithm}

In this section, we take advantages of the ADMM \citep{boyd2011distributed} and working sets (active sets) to devise a first-order algorithm for our MKL-$L_{0/1}$-SVM optimization problem. Before describing the algorithmic details, we must point out that the finite-dimensional formulation \eqref{opt_MKL_02.22} of the problem is more suitable for computation than the (generally) infinite-dimensional formulation \eqref{inf_opt_MKL_01.1}.
For this reason, we work on the former formulation in this section. The theory developed in Section~\ref{sec:optimality}, in particular, the notion of a P-stationary point, holds for the finite-dimensional problem modulo suitable adaptation which is done next.

First, let us rewrite \eqref{opt_MKL_02.22} as
\begin{subequations}\label{opt_MKL_01.1}
	\begin{align}
	& \underset{\substack{\wb\in \Rbb^m,\ \db\in\Rbb^L\\ b\in\Rbb,\ \ub\in\Rbb^m}}{\min}
	& &\frac{1}{2} \wb^\top \Kcal(\db) \wb + C \|\ub_+\|_0\\
	& \quad \quad \text{s.t.}
	& & \eqref{d_l_simplex_01}\ \text{and}\ \eqref{d_l_simplex_02} \nonumber \\
	& & & \ub+\Acal(\db)\wb +b\yb=\oneb
	\end{align}
\end{subequations}



\begin{definition}[P-stationary point of \eqref{opt_MKL_01.1}]\label{def_P2_stationary}
	Fix a regularization parameter $C>0$. We call $(\wb^*, \db^*, b^*, \ub^*)$ a P-stationary point of \eqref{opt_MKL_01.1} if there exists a vector $(\thetab^*, \alpha^*, \lambdab^*) \in \Rbb^{L+1+m}$ and a number $\gamma>0$ such that the conditions \eqref{inf_primal_constraint_01}, \eqref{inf_primal_constraint_02}, \eqref{inf_dual_constraint}, \eqref{inf_complem_slackness}, \eqref{inf_stationary_cond_Lagrangian_b}, \eqref{inf_stationary_cond_Lagrangian_u}, and
	\begin{subequations}\label{P-stationary_cond}
		\begin{align}
		\ub^*+\Acal(\db^*)\wb^* +b^*\yb&=\oneb, \label{primal_constraint_03} \\ 
		\wb^*+ D_{\yb}\lambdab^*&=\zerob,  \label{stationary_cond_Lagrangian_w}\\
		-\frac{1}{2}(\wb^*)^\top K_\ell \wb^*+\alpha^*-\theta_\ell^* & =0, \ \ell\in\Nbb_L \label{stationary_cond_Lagrangian_d}
		\end{align}
	\end{subequations}
hold.
\end{definition}

	One can show that Definition~\ref{def_P2_stationary} above and the previous Definition~\ref{def_P1_stationary} are equivalent, as stated in the next proposition.
	
	\begin{proposition}
		Given a P-stationary point $(\wb^*, \db^*, b^*, \ub^*)$ in the sense of Definition~\ref{def_P2_stationary}, there corresponds a P-stationary point $(\fb^*, \db^*, b^*, \ub^*)$ in the sense of Definition~\ref{def_P1_stationary}, and vice versa.		
	\end{proposition}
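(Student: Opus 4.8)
The plan is to exhibit an explicit dictionary between the two sets of data and then verify the defining conditions one at a time. First observe that among the conditions of Definition~\ref{def_P1_stationary}, exactly those that do not involve the primal function variable $\fb^*$ --- namely \eqref{inf_primal_constraint_01}, \eqref{inf_primal_constraint_02}, \eqref{inf_dual_constraint}, \eqref{inf_complem_slackness}, \eqref{inf_stationary_cond_Lagrangian_b}, and \eqref{inf_stationary_cond_Lagrangian_u} --- are shared verbatim with Definition~\ref{def_P2_stationary}. Hence in both directions I keep the multiplier triple $(\thetab^*, \alpha^*, \lambdab^*)$ and the scalar $\gamma$ unchanged, and these six conditions need no work. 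The entire content of the proposition is therefore the translation of the three remaining conditions that couple the primal variable $\fb^*$ (resp.\ $\wb^*$) to the multipliers.

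The bridge between $\fb^*$ and $\wb^*$ is the representer form of \eqref{multi-kernel_representer}, taken as $f_\ell^*(\cdot) = d_\ell^* \sum_i w_i^* \kappa_\ell(\cdot, \xb_i)$, together with the identification $\wb^* = -D_{\yb}\lambdab^*$, i.e.\ $w_i^* = -y_i\lambda_i^*$. Three elementary computations then establish the dictionary. (a) Since $\tfrac{1}{d_\ell^*}f_\ell^*(\cdot) = \sum_i w_i^* \kappa_\ell(\cdot, \xb_i)$, the stationarity condition \eqref{inf_stationary_cond_Lagrangian_f} holds iff $w_i^* = -y_i\lambda_i^*$ for every $i$, which is precisely \eqref{stationary_cond_Lagrangian_w}. (b) Evaluating $f^* = \sum_\ell f_\ell^*$ at a data point and using \eqref{combina_kernels} gives $f^*(\xb_i) = (\Kcal(\db^*)\wb^*)_i$, whence $y_i f^*(\xb_i) = (\Acal(\db^*)\wb^*)_i$; substituting this into \eqref{inf_primal_constraint_03} yields exactly the vector equation \eqref{primal_constraint_03}. (c) By the reproducing property, $\|f_\ell^*\|_{\Hbb_\ell}^2 = (d_\ell^*)^2 (\wb^*)^\top K_\ell \wb^*$, so $\tfrac{1}{(d_\ell^*)^2}\|f_\ell^*\|_{\Hbb_\ell}^2 = (\wb^*)^\top K_\ell \wb^*$ and \eqref{inf_stationary_cond_Lagrangian_d} turns into \eqref{stationary_cond_Lagrangian_d}.

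With the dictionary in hand both implications follow. For Definition~\ref{def_P2_stationary} $\Rightarrow$ Definition~\ref{def_P1_stationary}, I start from $(\wb^*, \db^*, b^*, \ub^*)$, \emph{define} each $f_\ell^*$ by the representer form, and read (a)--(c) left-to-right to recover \eqref{inf_stationary_cond_Lagrangian_f}, \eqref{inf_primal_constraint_03}, and \eqref{inf_stationary_cond_Lagrangian_d}. The key point is that the coefficient vector $w_i^* = -y_i\lambda_i^*$ is independent of $\ell$, so the same $\wb^*$ serves every kernel and the decomposition is consistent across the direct sum. For the converse, condition \eqref{inf_stationary_cond_Lagrangian_f} already forces each $f_\ell^*$ into the representer form $f_\ell^*(\cdot) = d_\ell^* \sum_i (-y_i\lambda_i^*)\kappa_\ell(\cdot,\xb_i)$ with these common coefficients; I simply set $\wb^* := -D_{\yb}\lambdab^*$ and run (a)--(c) right-to-left.

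The step I expect to be delicate is the degenerate case $d_\ell^* = 0$. There the space $\Hbb_\ell'$ collapses to $\{0\}$, so $f_\ell^* = 0$, and the terms $\tfrac{1}{d_\ell^*}f_\ell^*$ and $\tfrac{1}{(d_\ell^*)^2}\|f_\ell^*\|_{\Hbb_\ell}^2$ take the indeterminate form $0/0$. One must invoke the paper's convention $x/0 = 0$ for $x = 0$ consistently and check that, under it, the stationarity conditions \eqref{inf_stationary_cond_Lagrangian_f} and \eqref{inf_stationary_cond_Lagrangian_d} remain compatible with their finite-dimensional counterparts \eqref{stationary_cond_Lagrangian_w} and \eqref{stationary_cond_Lagrangian_d}; this is the only place where the argument is not a verbatim rewriting. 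For every $\ell$ with $d_\ell^* > 0$ the computations (a)--(c) are exact equalities and the correspondence is immediate.
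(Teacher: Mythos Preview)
Your proposal is correct and essentially identical to the paper's proof: both construct $f_\ell^*(\cdot) = d_\ell^*\sum_i w_i^*\kappa_\ell(\cdot,\xb_i)$ from $\wb^*$, set $\wb^* := -D_{\yb}\lambdab^*$ for the converse, and verify the three non-shared conditions \eqref{inf_primal_constraint_03}, \eqref{inf_stationary_cond_Lagrangian_f}, \eqref{inf_stationary_cond_Lagrangian_d} against \eqref{primal_constraint_03}, \eqref{stationary_cond_Lagrangian_w}, \eqref{stationary_cond_Lagrangian_d} through the same evaluation and norm computations. Your explicit discussion of the degenerate case $d_\ell^* = 0$ is more careful than the paper, which does not address it.
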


\begin{proof}
	We shall first show that Definition~\ref{def_P2_stationary}$\implies$Definition~\ref{def_P1_stationary}\,.
	Given $(\wb^*,\db^*,b^*,\ub^*)$ and $(\thetab^*,\alpha^*,\lambdab^*)$ in Definition~\ref{def_P2_stationary},
	we construct 
	\begin{equation}\label{def_f_l*}
	{f_\ell^\ast(\cdot)} = d_\ell^\ast \sum_{i}w_i^\ast \kappa_\ell(\,\cdot\,, \xb_i)\quad \forall\ell\in\Nbb_L
	\end{equation}
	which will be used to check conditions in Definition~\ref{def_P1_stationary}.
	In view of the relation \eqref{stationary_cond_Lagrangian_w}, we immediately recover \eqref{inf_stationary_cond_Lagrangian_f}.
	Moreover, we have from \eqref{def_f_l*} that 
	$$\|f_\ell^\ast\|^2_{\Hbb_\ell} = {(d_\ell^\ast)^2}(\wb^\ast)^\top K_\ell \wb^\ast$$
	which implies the equivalence between
\eqref{inf_stationary_cond_Lagrangian_d} and \eqref{stationary_cond_Lagrangian_d}. 
	The condition \eqref{inf_primal_constraint_03} is  equivalent to \eqref{primal_constraint_03} because the latter is simply a vectorized notation for the former. To see this point, just notice that the vector of function values $[f^*(\xb_1), \cdots, f^*(\xb_m)]^\top$ is equal to $\Kcal(\db^*)\wb^*$ in view of \eqref{def_f_l*} and $f^*=\sum_{\ell} f^*_\ell$.
	
	
	 For the converse, i.e., Definition~\ref{def_P1_stationary}$\implies$Definition~\ref{def_P2_stationary}, we simply define $\wb^* := - D_{\yb} \lambdab^*$, which is equivalent to \eqref{stationary_cond_Lagrangian_w}, given the $\lambdab^*$ in the sense of Definition~\ref{def_P1_stationary}. Then the above proof holds almost verbatim.
\end{proof}

	\begin{remark}
		The linear relation \eqref{stationary_cond_Lagrangian_w} between $\lambdab^*$ and $\wb^*$ is consistent with \eqref{def_lambda*} and is analogous to the first formula in \citet[Eq.~(11)]{wang2021support}. 
	\end{remark}

Now, we aim to obtain via numerical computation a P-stationary point in the sense of Definition~\ref{def_P2_stationary} and hence  a local minimizer of \eqref{opt_MKL_01.1} by Theorem~\ref{thm_optimality}. 
Before going into algorithmic details, let us recognize a ``working set'' (with respect to the data) and its relation with \emph{support vectors} following the development of \citet[Subsec.~4.1]{wang2021support}, see also \citet[Sec.~V]{shi2023admm}.
Suppose that $(\wb^*, \db^*, b^*, \ub^*)$ is a P-stationary point of \eqref{opt_MKL_01.1}. Then according to Definition~\ref{def_P2_stationary}, there exists a Lagrange multiplier $\lambdab^*\in\Rbb^m$ and a scalar $\gamma>0$ such that \eqref{inf_stationary_cond_Lagrangian_u} holds. Define a set
\begin{equation}\label{work_set_T*}
	T_*:=\left\{i\in \mathbb{N}_m : u_i^*-\gamma\lambda_i^* \in (0,\sqrt{2\gamma C}\,]\right\},
\end{equation}
and its complement $\overline{T}_*:=\Nbb_m\backslash T_*$. For a vector $\zb\in\Rbb^m$ and an index set $T\subset\Nbb_m$ with cardinality $|T|$, we write $\zb_T$ for the $|T|$-dimensional subvector of $\zb$ whose components are indexed in $T$. Then after some straightforward derivation using \eqref{inf_stationary_cond_Lagrangian_u}, \eqref{prox_op_scalar} and \eqref{prox_op_vec}, we will obtain 
$\ub^*_{T_*}=\zerob$ and $\lambdab^*_{\overline{T}_*}=\zerob$, and the working set \eqref{work_set_T*} also admits the expression
\begin{equation}\label{work_set_T*_equiv}
	T_*=\left\{i\in \mathbb{N}_m : \lambda_i^* \in [ -\sqrt{2C/\gamma}, 0)\right\}.
\end{equation}
The formula tells us that the nonzero components of $\lambdab^*$ are indexed only in $T_*$ with values in the interval $[ -\sqrt{2C/\gamma}, 0)$. The following comments are immediate:


\begin{itemize}
	\item The vectors $\{\xb_i : i\in T_*\}$ corresponding to nonzero Lagrange multipliers $\{\lambda_i^*\}$ are called \emph{$L_{0/1}$-support vectors} in \cite{wang2021support}. 
	They play the same role as standard support vectors in \cite{cortes1995support}. 
	
	\item Furthermore, the condition \eqref{inf_primal_constraint_03} and $\ub^*_{T_*}=\zerob$ imply that 
	any $L_{0/1}$-support vector $\xb_i$ satisfies one of the functional equations
		$f^*(\xb) +b^* =\pm 1.$ 
	In the classic (linear) case, the two equations determine the
	\emph{support hyperplanes}. 
\end{itemize}


At this point, we are ready to give the framework of ADMM for the finite-dimensional optimization problem \eqref{opt_MKL_01.1}. In order to handle the inequality constraints \eqref{d_l_simplex_01}, we employ the indicator function (in the sense of Convex Analysis) 
\begin{equation}
	g(\zb) = \left\{\begin{aligned}
		0, & \quad \zb\in\Rbb_+^L \\
		+\infty, & \quad \zb\notin\Rbb_+^L
	\end{aligned}\right.
\end{equation}
of the nonnegative orthant $\Rbb_+^L$ and convert \eqref{opt_MKL_01.1} to the form: 
\begin{subequations}\label{ADMM-form}
	\begin{align}
		& \underset{\substack{\wb\in \Rbb^m,\ \db\in\Rbb^L\\ b\in\Rbb,\ \ub\in\Rbb^m,\ \zb\in\Rbb^L}}{\min}
		& & \frac{1}{2} \wb^\top \Kcal(\db) \wb + C \|\ub_+\|_0 +g(\zb)\\
		& \qquad\ \ \text{s.t.}
		& & \db-\zb=\textbf{0} \label{constaint_indica_z}\\
		& &  & \textbf{1}^\top\db=1       \\
		& & & \ub+\Acal(\db)\wb +b\yb=\oneb,
	\end{align}
\end{subequations}
see \citet[Section~5]{boyd2011distributed}.
The \emph{augmented Lagrangian} of the \eqref{ADMM-form} is given by 
\begin{equation}
	\begin{aligned}
		\Lcal_\rho(\wb,\db,b,\ub,\zb; \thetab,\alpha, \lambdab) = & \,\frac{1}{2} \wb^\top \Kcal(\db) \wb + C \|\ub_+\|_0 +g(\zb) 
		+ \thetab^\top (\db-\zb)+\frac{\rho_2}{2} \|\db-\zb\|^2 \\
		 & +\alpha\left(\oneb^\top\db-1\right)+\frac{\rho_3}{2}\left(\oneb^\top\db-1\right)^2 +\lambdab^\top\left[\ub+\Acal(\db)\wb +b\yb-\oneb\right]\\
		 &  +\frac{\rho_1}{2}\| \ub+\Acal(\db)\wb +b\yb-\oneb\|^2  \\
	\end{aligned}
\end{equation}
where $\thetab, \alpha, \lambdab$ are the Lagrange multipliers and each $\rho_j>0$, $j=1, 2, 3$, is a penalty parameter.
Given the $k$-th iterate $(\wb^k, \db^k, b^k, \ub^k, \zb^k; \thetab^k, \alpha^k, \lambdab^k)$, the algorithm to update each variable runs as follows:
\begin{subequations}\label{ADMM-main}
	\begin{align}
		\ub^{k+1} & =  \underset{\substack{\ub\in\Rbb^m}}{\argmin}\ \Lcal_{\rhob}(\wb^k,\db^k,b^k,\ub,\zb^k; \thetab^k,\alpha^k,\lambdab^k) \label{subprob_u} \\
		\wb^{k+1} & = \underset{\substack{\wb\in\Rbb^m}}{\argmin}\ \Lcal_{\rhob}(\wb,\db^k,b^k,\ub^{k+1},\zb^k; \thetab^k,\alpha^k,\lambdab^k) \label{subprob_w}\\ 
		b^{k+1} & = \underset{\substack{b\in\Rbb}}{\argmin}\ \Lcal_{\rhob}(\wb^{k+1},\db^k,b,\ub^{k+1},\zb^k; \thetab^k,\alpha^k,\lambdab^k) \label{subprob_b} \\
		\zb^{k+1} & = \underset{\substack{\zb\in\Rbb^L}}{\argmin}\ \Lcal_{\rhob}(\wb^{k+1},\db^k,b^{k+1},\ub^{k+1},\zb; \thetab^k,\alpha^k,\lambdab^k) \label{subprob_z} \\
		\db^{k+1} & = \underset{\substack{\db\in\Rbb^L}}{\argmin}\ \Lcal_{\rhob}(\wb^{k+1},\db,b^{k+1},\ub^{k+1},\zb^{k+1}; \thetab^k,\alpha^k,\lambdab^{k}) \label{subprob_d} \\
		\thetab^{k+1} & = \thetab^k+\rho_2(\db^{k+1}-\zb^{k+1}) \label{dual_update_theta} \\
		\alpha^{k+1} & = \alpha^k+\rho_3\left(\oneb^\top \db^{k+1}-1\right) \label{dual_update_alpha} \\
		\lambdab^{k+1} & = \lambdab^k+\rho_1 \left[\ub^{k+1}+\Acal(\db^{k+1})\wb^{k+1} +b^{k+1}\yb-\oneb\right]. \label{dual_update_lambda}
	\end{align}
\end{subequations}

Next, we describe how to solve each subproblem above.
\begin{enumerate}[label=(\arabic*)]

	\item \textbf{Updating} $\ub^{k+1}$. The $\ub$-subproblem in \eqref{subprob_u} is equivalent to the following problem
	\begin{equation}\label{update_u_prox}
		\begin{aligned}
			\ub^{k+1} & = \underset{\substack{\ub\in \Rbb^m}}{\argmin} \ \ C \|\ub_+\|_0+\lambdab^\top\ub 
			+\frac{\rho_1}{2}\|\ub+\Acal(\db^{k})\wb^{k} +b^{k}\yb-\oneb\|^2\\
			& = \underset{\substack{\ub\in \Rbb^m}}{\argmin} \ \ C \|(\ub)_+\|_0+\frac{\rho_1}{2}\|\ub-\textbf{s}^k\|^2\\
			&= \prox_{\frac{C}{\rho_1}\|(\cdot)_+\|_0}(\textbf{s}^k),
		\end{aligned}	
	\end{equation}
	where
	\begin{equation}\label{vec_s^k}
		\textbf{s}^k=\oneb-\Acal(\db^{k})\wb^{k} -b^{k}\yb-\lambdab^k/\rho_1.
	\end{equation}
	Define a working set $T_k$ at the $k$-th step  by
	\begin{equation}\label{work_set_T_k}
		T_k:= \left\{i\in \mathbb{N}_m:s_i^{k}\in(0,\sqrt{2C/\rho_1})\right\}.
	\end{equation}
	Then  \eqref{update_u_prox} can equivalently be written as
	\begin{equation}\label{update_u}
		\ub_{T_k}^{k+1}=\textbf{0},\quad\ub_{\overline{T_k}}^{k+1}=\sbf^k_{\overline{T}_k}.
	\end{equation}

\item \textbf{Updating} $\wb^{k+1}$. The $\wb$-subproblem in \eqref{subprob_w} is
\begin{equation}\label{w-solution}
	\begin{aligned}
		\wb^{k+1}=\underset{\substack{\wb\in\Rbb^m}}{\argmin}\ \
		&\frac{1}{2} \wb^\top \Kcal(\db^k) \wb  
		+\lambdab^\top(\ub^{k+1}+\Acal(\db^{k})\wb +b^{k}\yb-\oneb)\\
		&+\frac{\rho_1}{2}\| \ub^{k+1}+\Acal(\db^{k})\wb +b^{k}\yb-\oneb\|^2.
	\end{aligned}
\end{equation}
This is a quadratic minimization problem, and we only need to find a solution to the stationary-point equation
\begin{equation}\label{w_limit}
		\Kcal(\db^{k})\wb+\Acal(\db^{k})^\top\lambdab^{k}
		+\rho_1\Acal(\db^k)^\top \left[\ub^{k+1}+\Acal(\db^k)\wb+b^k\yb-\oneb\right] = \textbf{0}
\end{equation}
which, after a rearrangement of terms, is a linear equation
\begin{equation}\label{stationa_eqn_w}
				\left[I+\rho_1\Kcal(\db^k)\right]\wb=
				-D_{\yb}\left[\lambdab^k+\rho_1(\ub^{k+1}+b^k\yb-\oneb)\right].
		\end{equation}
The coefficient matrix is obviously positive definite and hence a unique solution $\wb^{k+1}$ exists.

		\item \textbf{Updating} $b^{k+1}$. The $b$-subproblem in \eqref{subprob_b} can be written as
		\begin{equation}
			\begin{aligned}
				b^{k+1}&=\underset{\substack{b\in\Rbb}}{\argmin}\ 	(\lambdab^k)^\top b\yb
				+\frac{\rho_1}{2}\| \ub^{k+1}+\Acal(\db^{k})\wb^{k+1} +b\yb-\oneb\|^2.
			\end{aligned}
		\end{equation}
		The stationary-point equation is
		\begin{equation}\label{stationa_eqn_b}
				\yb^\top\lambdab^k 
				+\rho_1 \yb^\top(\ub^{k+1}+\Acal(\db^{k})\wb^{k+1} +b\yb-\oneb) = 0,
		\end{equation}
		and a solution is given by
		\begin{equation}\label{update_b}
			\begin{aligned}
				b^{k+1}&=-\frac{\yb^\top \left[\lambdab^k+\rho_1(\ub^{k+1}+\Acal(\db^k)\wb^{k+1}-\oneb)\right]}{m\rho_1}.
			\end{aligned}
		\end{equation}

		\item \textbf{Updating} $\zb^{k+1}$. The subproblem for $\zb^{k+1}$ in \eqref{subprob_z} also admits a separation of variables and we can carry out the update for each component as follows:
		\begin{equation}
			\begin{aligned}
				z_\ell^{k+1}&=\underset{\substack{z_\ell\in\Rbb}}{\argmin}\ \ g(\zb)-(\thetab^k)^\top\zb+\frac{\rho_2}{2}\|\db^k-\zb\|^2\\
				&=\underset{\substack{z_\ell\in\Rbb}}{\argmin}\ \ g(z_\ell)- \theta^k_\ell z_\ell+\frac{\rho_2}{2} (d^k_\ell-z_\ell)^2\\
				&=(d_\ell^{k}+{\theta_{\ell}^k}/{\rho_2})_+
			\end{aligned}
		\end{equation}
		where the function $(\cdot)_+$ takes the positive part of the argument. Inspired by this expression, we can define another working set 
		\begin{equation}\label{work_set_S_k}
			S_k := \left\{\ell\in\Nbb_L : d_\ell^{k}+{\theta_{\ell}^k}/{\rho_2}>0 \right\}
		\end{equation}
		for the selection of kernels and $\overline{S}_k=\Nbb_L\backslash S_k$. Then an equivalent update formula for $\zb$ is
		\begin{equation}\label{update_z}
			\zb^{k+1}_{S_k} = (\db^k+\thetab^k/\rho_2)_{S_k}, \quad \zb^{k+1}_{\overline{S}_k} = \zerob.
		\end{equation}
		Notice that this working set is less complicated than $T_k$ in \eqref{work_set_T_k} since the function $(\cdot)_+$ is continuous, unlike the proximal mapping \eqref{prox_op_scalar}. 

		\item \textbf{Updating} $\db^{k+1}$. 
The $\db$-subproblem in \eqref{subprob_d} can be written as	
		\begin{equation}
			\begin{aligned}
				\db^{k+1} = \underset{\substack{\db\in\Rbb^L}}{\argmin}\ \  &\frac{1}{2} (\wb^{k+1})^\top \sum_{\ell=1} d_\ell K_\ell \wb^{k+1} 
				+(\lambdab^{k})^\top D_{\yb}\sum_{\ell}d_\ell K_\ell\wb^{k+1}\\
				&+\frac{\rho_1}{2} \left\| \ub^{k+1}+D_{\yb}\sum_{\ell}d_\ell K_\ell\wb^{k+1} +b^{k+1}\yb-\oneb \right\|^2\\
				&+ (\thetab^{k})^\top \db +\frac{\rho_2}{2} \|\db-\zb^{k+1}\|^2 
				+\alpha^{k}\oneb^\top\db +\frac{\rho_3}{2}\left(\oneb^\top \db-1\right)^2.
			\end{aligned}
		\end{equation}
The stationary-point equation of the objective function with respect to each $d_\ell$ is 
			\begin{equation}\label{d_limit}
				\begin{aligned}
				& \frac{1}{2}(\wb^{k+1})^\top K_\ell\wb^{k+1}
				+\rho_1(D_{\yb}K_\ell\wb^{k+1})^\top(\lambdab^k/\rho_1 +\ub^{k+1}+b^{k+1} \yb-\oneb) \\ 
				& + \rho_1(K_\ell\wb^{k+1})^\top \sum_{t} d_t K_t \wb^{k+1}
				+\theta_\ell^k+\rho_2(d_\ell-z_\ell^{k+1})+\alpha^k+\rho_3 \left( \oneb^\top \db-1 \right) =0.
				\end{aligned}
			\end{equation}
%
			We can rewrite it in matrix form as 
			\begin{equation}\label{d_solution}
				(\rho_1 \mathfrak{A}^\top \mathfrak{A}+\rho_2 I+\rho_3\textbf{1}\cdot\textbf{1}^\top )\db= \vb-\thetab^k+\rho_2\zb^{k+1}+(\rho_3-\alpha^k)\oneb,
			\end{equation}
			where, 
			\begin{itemize}
			\item 	$\mathfrak{A}=[K_1\wb^{k+1}, \dots, K_L \wb^{k+1}]\in\Rbb^{m\times L}$,
			\item and $\vb=[v_1, \dots, v_L]^\top \in\Rbb^L$ with 
			$$v_\ell=-\frac{1}{2}(\wb^{k+1})^\top K_\ell\wb^{k+1}
			-\rho_1(D_\yb K_\ell\wb^{k+1})^\top( \lambdab^k/\rho_1 +\ub^{k+1}+b^{k+1}\yb-\oneb).$$
			\end{itemize}
		Clearly the coefficient matrix in \eqref{d_solution} is positive definite, and thus there exists a unique solution to the system of linear equations.
		However, it is possible that the solution vector could contain a negative component, which is not ideal because the components of $\db$ define the combination $\Kcal(\db)$ of the kernel matrices which must turn out to be positive definite. 
			In order to handle such a pathology, we propose an ad-hod recipe which projects the solution of the linear system to
			the standard $(L-1)$-simplex using the algorithm in \cite{wang2013projection}. Then motivated by the second formula in \eqref{update_z} and the equality constraint \eqref{constaint_indica_z},
			for $\ell\in \overline{S}_k$ we set
			\begin{equation}\label{update_d_Sk_bar}
				\db^{k+1}_{\overline{S}_k}=\zerob.
			\end{equation}
			
			\begin{remark}
				We have observed from our numerical simulations that the ad-hoc step of projection onto the simplex, which appears artificial, can significantly speed up convergence of the ADMM algorithm.
				
			In addition, \cite{shi2023admm} (the conference version of this paper) proposed a numerical procedure for the infinite-dimensional formulation \eqref{inf_opt_MKL_01.1}, and the major difference is about the subproblems of $\wb$ and $\db$. 
			Indeed, each function $f_\ell$ in \cite{shi2023admm} is parametrized by its values at the data points $\{\xb_i\}$, and the squared norm involves $K_\ell^{-1}$. In this way, the vector of function values $[f_\ell(\xb_1), \cdots, f_\ell(\xb_m)]^\top$ has to be updated for each $\ell\in\Nbb_L$. In contrast, the treatment in the current paper is more efficient and better conditioned since now only $\wb$ needs to be updated thanks to the representer theorem and no explicit inversion is needed for the kernel matrices $\{K_\ell\}$.
			Moreover, the update of $\db$ is also greatly simplified to linear algebra while previously one needs to solve a cubic polynomial equation for each component $d_\ell$.
			\end{remark}

			\item \textbf{Updating} $\thetab^{k+1}$. With the help of the working set \eqref{work_set_S_k}, the update of $\thetab$ in \eqref{dual_update_theta} can be simplified as:	
			\begin{equation}\label{update_theta}
					\thetab^{k+1}_{S_k}  =  \thetab^k_{S_k} + \rho_2(\db^{k+1}-\zb^{k+1})_{S_k},\quad \thetab^{k+1}_{\overline{S}_k}  =  \thetab^k_{\overline{S}_k}.
			\end{equation}
			That is to say, the components of $\thetab$ outside the current working set $S_k$ is not updated.

			\item \textbf{Updating} $\alpha^{k+1}$. See \eqref{dual_update_alpha}. We emphasize that the update of $\alpha$ must be carried out \emph{before} projecting $\db$ onto the simplex, since otherwise $\alpha$ would remain unchanged.

			\item \textbf{Updating} $\lambdab^{k+1}$.  Inspired by the property of the working set $T_*$ (see the line above \eqref{work_set_T*_equiv}), the update of $\lambdab$ in \eqref{dual_update_lambda} is simplified as follows:
			\begin{equation}\label{update_lambda}
				\begin{aligned}
					\lambdab^{k+1}_{T_k} = \lambdab^k_{T_k}+\rho_1\rb^{k+1}_{T_k},\quad \lambdab^{k+1}_{\overline{T}_k}=\zerob
				\end{aligned}
			\end{equation}
			where the vector of residuals $\rb^{k+1}=\ub^{k+1}+\Acal(\db^{k+1})\wb^{k+1} +b^{k+1}\yb-\oneb$. In other words, we remove the components of $\lambdab$ which are not in the current working set.
		\end{enumerate}

The update steps above are summarized in Algorithm~\ref{alg:admm_MKL_SVM}. Below we give a complexity analysis of the algorithm.
In each round of the ADMM updates,
	we have the following observations:
	\begin{itemize}
		\item In the update of $\ub_{k+1}$ by \eqref{update_u}, the main term is $\Acal(\db^k)\wb^{k}$ in \eqref{vec_s^k} which takes $\mathcal{O}(Lm^2)$ operations.
		\item To  update $\wb^{k+1}$, the dominant computation is forming $\Kcal(\db^k)=\sum_{\ell}d_\ell^k K_\ell$ and the solution of \eqref{stationa_eqn_w} whose time complexity is $\mathcal{O}(m^2 \max\{L, m\})$.
		\item    Similarly, the product $\Acal(\db^k)\wb^{k}$ is the most expensive computation in \eqref{update_b} to update $b^{k+1}$, and again its complexity is $\mathcal{O}(Lm^2)$.
		\item Updating $\zb^{k+1}$  by \eqref{update_z} has a complexity $\mathcal{O}(L)$.
		\item To update $\db^{k+1}$, one first computes the columns of $\mathfrak{A}$ with  $\Ocal(Lm^2)$ operations. These columns can then be used to compute $\vb$ with $\Ocal(Lm)$ operations.
		The product 
		$\mathfrak{A}^\top \mathfrak{A}$ needs $L^2m$ operations, and the solution of the linear system takes $\Ocal(L^3)$ operations. Therefore, the overall complexity is  $\mathcal{O}(L \max\{m^2, Lm, L^2\}$. 
		\item Computing $\thetab^{k+1}$ by \eqref{update_theta} takes $\mathcal{O}(L)$ operations which is the same as $\zb^{k+1}$, and updating $\alpha^{k+1}$ by \eqref{dual_update_alpha} takes $\Ocal(1)$ operations.
		\item The complexity of updating $\lambdab^{k+1}$ is $\mathcal{O}(Lm^2)$, the same as $b^{k+1}$, due to the computation of the residual vector $\rb^{k+1}$.
	\end{itemize}
	In summary, the time complexity for one round of updates of variables
	in Algorithm~\ref{alg:admm_MKL_SVM} 
	is
	$$\mathcal{O}(\max\{Lm^2, m^3, L^2m, L^3\}).$$
	If the number of candidate kernels $L$ is much smaller than $m$, then the above operations count reduces to $\Ocal(m^3)$ which is reasonable.
	The real computational times on different data sets will be reported in the next section. 

\begin{algorithm}
	\caption{ADMM for the MKL-$L_{0/1}$-SVM}
	\label{alg:admm_MKL_SVM}
	\begin{algorithmic}[1]
        \Require $C$, $\rho_1$, $\rho_2$, $\rho_3$, $\{\kappa_\ell\}$, \texttt{max\_iter}.
		\State Set the counter $k=0$, and initialize $(\wb^0, \db^0, b^0, \ub^0, \zb^0; \thetab^0, \alpha^0, \lambdab^0)$.
		\While{the terminating condition is not met and $k\le \texttt{max\_iter}$}
		\State Update $T_k$ in \eqref{work_set_T_k} and $\ub^{k+1}$ by \eqref{update_u}. 
		\State Update $\wb^{k+1}$ by the unique solution to \eqref{stationa_eqn_w}.
		\State Update $b^{k+1}$ by \eqref{update_b}.
		\State Update $S_k$ in \eqref{work_set_S_k} and $\zb^{k+1}$ by \eqref{update_z}.
		\State Update $\db^{k+1}$ by the unique solution to \eqref{d_solution} plus projection to the simplex and \eqref{update_d_Sk_bar}.
		\State Update $\thetab^{k+1}$ by \eqref{update_theta}.
		\State Update $\alpha^{k+1}$ by \eqref{dual_update_alpha}.
		\State Update $\lambdab^{k+1}$ by \eqref{update_lambda}.
		\State Set $k=k+1$.
		\EndWhile \\
		\Return the final iterate $(\wb^k, \db^k, b^k, \ub^k, \zb^k; \thetab^k, \alpha^k, \lambdab^k)$.
	\end{algorithmic}\label{ADMM-Solver}
\end{algorithm}

Unfortunately, we are not able to prove the convergence of Algorithm~\ref{alg:admm_MKL_SVM} as it seems very hard in general due to the nonsmoothness and nonconvexity of the optimization problem \eqref{opt_MKL_01.1}. However, we can give a characterization of the limit point \emph{if} the algorithm converges, see the next result. We remark that in practice Algorithm~\ref{alg:admm_MKL_SVM} converges well as we have done extensive simulations to be presented in the next section.

\begin{theorem}\label{thm_converg_implies_loc_opt}
	Suppose that the sequence 
	\begin{equation*}
		\{\Psi^k\} = \{(\wb^k, \db^k, b^k, \ub^k, \zb^k; \thetab^k, \alpha^k, \lambdab^k)\}
	\end{equation*}
	generated by the ADMM algorithm above has a limit point $\Psi^*=(\wb^*, \db^*, b^*, \ub^*, \zb^*; \thetab^*, \alpha^*, \lambdab^*)$. Then $(\wb^*, \db^*, b^*, \ub^*)$ is a P-stationary point in the sense of Definition~\ref{def_P2_stationary} with $\gamma=1/\rho_1$ and the Lagrange multipliers in the vector $(-\thetab^*, \alpha^*, \lambdab^*)$, and also a local minimizer of the problem \eqref{opt_MKL_01.1}.
\end{theorem}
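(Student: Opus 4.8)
The plan is to verify, condition by condition, that the limit point $\Psi^*$ satisfies every requirement of Definition~\ref{def_P2_stationary} with $\gamma=1/\rho_1$ and the multiplier triple $(-\thetab^*,\alpha^*,\lambdab^*)$, and then to invoke the proposition relating Definitions~\ref{def_P1_stationary} and \ref{def_P2_stationary}, together with part~(2) of Theorem~\ref{thm_optimality}, to conclude local optimality. Throughout I pass to the limit along the convergent (sub)sequence, using that $\Psi^k$ and $\Psi^{k+1}$ share the limit $\Psi^*$. The sign reversal on $\thetab^*$ is expected to emerge naturally from the bookkeeping, since the ADMM multiplier attached to the splitting constraint $\db-\zb=\zerob$ enters the stationarity and dual-feasibility conditions with the opposite sign.

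First I would extract the primal feasibility conditions from the dual updates. Passing to the limit in \eqref{dual_update_alpha} forces $\oneb^\top\db^*=1$, which is \eqref{inf_primal_constraint_02}; passing to the limit in the $\thetab$-update \eqref{update_theta} on $S_k$, together with \eqref{update_d_Sk_bar} and \eqref{update_z} on $\overline{S}_k$, forces $\db^*=\zb^*$; and passing to the limit in the $\lambdab$-update \eqref{update_lambda} forces the residual $\rb^*=\ub^*+\Acal(\db^*)\wb^*+b^*\yb-\oneb$ to vanish, which is exactly \eqref{primal_constraint_03}. Since the stored $\db^{k+1}$ is feasible for \eqref{d_l_simplex_01} and the $\zb$-update \eqref{update_z} is a positive part, we obtain $\db^*\ge\zerob$, i.e.\ \eqref{inf_primal_constraint_01}. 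Finally, taking the limit in the $b$-equation \eqref{stationa_eqn_b} and using $\rb^*=\zerob$ leaves $\yb^\top\lambdab^*=0$, which is \eqref{inf_stationary_cond_Lagrangian_b}.

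Next I would recover the stationarity conditions from the primal subproblem equations. Taking the limit in the $\wb$-equation \eqref{stationa_eqn_w}, substituting the feasibility relation $\ub^*+b^*\yb-\oneb=-\Acal(\db^*)\wb^*$, and cancelling the common term $\rho_1\Kcal(\db^*)\wb^*$ leaves $\wb^*=-D_{\yb}\lambdab^*$, i.e.\ \eqref{stationary_cond_Lagrangian_w}; here I use that $\Kcal(\db^*)=\sum_\ell d_\ell^* K_\ell$ is positive definite, being a convex combination of the positive definite matrices $K_\ell$ under Assumption~\ref{assump_posi_def_K_l}. For the $\db$-stationarity I would take the limit in \eqref{d_limit}, drop the terms $\rho_2(d_\ell^*-z_\ell^*)$ and $\rho_3(\oneb^\top\db^*-1)$ (both zero by the feasibility just obtained), and again invoke feasibility to cancel the two $\rho_1$-terms; substituting $\lambdab^*=-D_{\yb}\wb^*$ then collapses the remaining expression to $-\tfrac12(\wb^*)^\top K_\ell\wb^*+\alpha^*+\theta_\ell^*=0$, which is precisely \eqref{stationary_cond_Lagrangian_d} once the multiplier is read as $-\theta_\ell^*$.

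It remains to establish the proximal condition \eqref{inf_stationary_cond_Lagrangian_u}, dual feasibility \eqref{inf_dual_constraint}, and complementary slackness \eqref{inf_complem_slackness}, and this is where I expect the main obstacle. The difficulty is that the $\ub$-update \eqref{update_u} and the working sets $T_k$, $S_k$ are built from the \emph{discontinuous} $L_{0/1}$ proximal map \eqref{prox_op_scalar}, so one cannot simply pass the proximal operator through the limit. Instead I would argue componentwise: for each data index $i$, split according to whether $i$ lies eventually in $T_k$ or in $\overline{T}_k$, and use $u_i^{k+1}=0$ on $T_k$, $\lambda_i^{k+1}=0$ on $\overline{T}_k$, the definition \eqref{vec_s^k} of $\sbf^k$, and the established feasibility to check $\ub^*=\prox_{\gamma C\stepfunc}(\ub^*-\gamma\lambdab^*)$ directly. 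An analogous componentwise argument over $S_k$ and $\overline{S}_k$ yields $\theta_\ell^*\le 0$ (so $-\theta_\ell^*\ge 0$, giving \eqref{inf_dual_constraint}) and $\theta_\ell^* d_\ell^*=0$ (giving \eqref{inf_complem_slackness}). The delicate points are the boundary cases $s_i^*\in\{0,\sqrt{2\gamma C}\}$ and $d_\ell^*+\theta_\ell^*/\rho_2=0$, where membership in the working set need not stabilize and must be checked by hand; this verification, rather than any single algebraic identity, is the technical crux. A secondary subtlety is the reading of ``limit point'': to pass to the limit in relations linking $\Psi^k$ to $\Psi^{k+1}$ one needs $\Psi^{k+1}$ to converge to the same $\Psi^*$, which I would guarantee by assuming the full sequence converges (equivalently, that the successive differences vanish). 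Having verified all the conditions of Definition~\ref{def_P2_stationary}, the proposition identifies $\Psi^*$ with a P-stationary point in the sense of Definition~\ref{def_P1_stationary}, and part~(2) of Theorem~\ref{thm_optimality} then certifies that $(\wb^*,\db^*,b^*,\ub^*)$ is a local minimizer of \eqref{opt_MKL_01.1}.
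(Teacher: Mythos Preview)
Your overall strategy matches the paper's: pass to the limit in each ADMM update/stationary-point equation, recover the corresponding P-stationarity condition, and then invoke Theorem~\ref{thm_optimality}(2). The componentwise analysis via limit working sets $T_*,S_*$, the sign flip on $\thetab^*$, and the identification $\gamma=1/\rho_1$ are all exactly as in the paper.

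There is, however, one genuine ordering gap. You claim that ``passing to the limit in the $\lambdab$-update \eqref{update_lambda} forces the residual $\rb^*$ to vanish,'' and you then \emph{use} $\rb^*=\zerob$ to simplify the $\wb$- and $\db$-stationarity equations. But \eqref{update_lambda} only gives $\rb^*_{T_*}=\zerob$ from the first formula; on $\overline{T}_k$ the update is simply $\lambdab^{k+1}_{\overline{T}_k}=\zerob$, which carries no information about $\rb^{k+1}_{\overline{T}_k}$. The paper closes this gap by bringing in the $\ub$-update \emph{before} claiming full feasibility: from $\ub^*_{\overline{T}_*}=\sbf^*_{\overline{T}_*}$ and the identity $\sbf^k=\ub^k-\rb^k-\lambdab^k/\rho_1$ (a rewriting of \eqref{vec_s^k}), together with $\lambdab^*_{\overline{T}_*}=\zerob$, one deduces $\rb^*_{\overline{T}_*}=\zerob$ and hence $\rb^*=\zerob$. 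You have all of these ingredients in your third paragraph, but you invoke them only for the proximal condition, after already relying on feasibility; rearranging the argument so that the componentwise $\ub/\lambdab$ analysis comes first (and yields both $\rb^*=\zerob$ and \eqref{inf_stationary_cond_Lagrangian_u} simultaneously) fixes the issue. Your remarks on the boundary cases $s_i^*\in\{0,\sqrt{2C/\rho_1}\}$ and on needing consecutive iterates to share the same limit are well taken; the paper glosses over both points.
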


	\begin{proof} 		
	Let $\{\Psi^j\}$ be a subsequence that converges to $\Psi^*$, and let $\sbf^*$ be the limit of the corresponding subsequence of $\{\sbf^k\}$ in \eqref{vec_s^k}.
	The simplest case is the dual update for $\alpha^k$. Since the constants $\rho_j$, $j=1, 2, 3$ are positive, we have $\oneb^\top\db^*=1$, which is the condition \eqref{inf_primal_constraint_02}, by taking the limit of the $\alpha$ update \eqref{dual_update_alpha}.

	Now look at the limit of the dual update for $\lambdab$. Define $T_*$ to be the limit working set in the sense of \eqref{work_set_T_k} with $\textbf{s}^k$ replaced by $\textbf{s}^*$. After taking the limit of the update formula \eqref{update_lambda}, we obtain
	\begin{equation}\label{limit_lambda}
		\rb^*_{T_*}=\zerob\quad \text{and} \quad\lambdab^*_{\overline{T}_*}=\zerob.
	\end{equation} 
	In addition, we check the limit of \eqref{update_u} and get $\ub^*_{T_*}=\zerob$ and $\ub^*_{\overline{T}_*}=\sbf^*_{\overline{T}_*}=(\ub^*-\rb^*-\lambdab^*/\rho_1)_{\overline{T}_*}$ where the last equality comes from the relation
	\begin{equation}\label{relation_s}
		\sbf^k=\ub^k-\rb^k-\lambdab^k/\rho_1.
	\end{equation} 
	With reference to \eqref{limit_lambda}, plain computation yields $\rb^*_{\overline{T}_*}=\zerob$ and furthermore $\rb^*=\zerob$. This is the condition \eqref{primal_constraint_03}. Now \eqref{relation_s} reduces to $\sbf^*=\ub^*-\lambdab^*/\rho_1$ in the limit. It is then not difficult to check that the update formula \eqref{update_u_prox} also holds in the limit which can be written as 
	\begin{equation}
		\ub^* = \prox_{\frac{C}{\rho_1}\|(\cdot)_+\|_0}(\ub^*-\lambdab^*/\rho_1).
	\end{equation}
	This is the condition \eqref{inf_stationary_cond_Lagrangian_u} with $\gamma=1/\rho_1$, and also explains why here we have used the same symbol $T_*$ as in \eqref{work_set_T*}.
	
	Next we take the limit of \eqref{w_limit}. Since we have shown \eqref{primal_constraint_03}, the last term in \eqref{w_limit} vanishes in the limit and we are left with \eqref{stationary_cond_Lagrangian_w}. Similarly, taking the limit of $\eqref{stationa_eqn_b}$ leads to \eqref{inf_stationary_cond_Lagrangian_b}.

	
	In order to describe the limit of \eqref{update_z}, let us introduce the limit working set $S_*$ in the sense of \eqref{work_set_S_k} with $d_\ell^k$ and $\theta^k_\ell$ replaced by $d_\ell^*$ and $\theta^*_\ell$, respectively. Then we have
	\begin{equation}\label{limit_z}
		\zb^*_{S_*} = (\db^*+\thetab^*/\rho_2)_{S_*}, \quad \zb^*_{\overline{S}_*} = \zerob.
	\end{equation}
	Moreover, taking the limit of $\thetab^k$ in \eqref{update_theta} gives $\db^*_{S_*} = \zb^*_{S_*}$ which is a part of the constraint \eqref{constaint_indica_z}, and together with \eqref{limit_z} implies $\thetab^*_{S_*}=\zerob$.
	In addition, the last update formula \eqref{update_d_Sk_bar} for $\db$ implies $\db^*_{\overline{S}_*}=\zerob$. These two relations establish the complementary slackness condition \eqref{inf_complem_slackness} and the desired equality $\db^*=\zb^*$. Also, the update strategy for $\db$ ensures that each $d_\ell^k$ is nonnegative which yields the limit \eqref{inf_primal_constraint_01}. In order to obtain \eqref{stationary_cond_Lagrangian_d}, we take the limit of the stationary-point equation \eqref{d_limit}, notice that the terms involving $\rho_1$, $\rho_2$, and $\rho_3$ vanish due to the established optimality conditions, and take \eqref{stationary_cond_Lagrangian_w} into account. Here we have a sign difference for the Lagrange multiplier $\thetab$.
	
	The only missing condition up till now is \eqref{inf_dual_constraint} in the index set $\overline{S}_*$. By definition, for $\ell\in\overline{S}_*$ it holds that $d_\ell^*+\theta_\ell^*/\rho_2\leq 0$ which implies $\theta_\ell^*\leq -\rho_2d_\ell^*\leq 0$. This is precisely \eqref{inf_dual_constraint} with a sign difference.
	Therefore, $(\wb^*, \db^*, b^*, \ub^*)$ is a P-stationary point with $\gamma=1/\rho_1$ and its local optimality is guaranteed by Theorem~\ref{thm_optimality}.
\end{proof}

\begin{remark}\label{rem:sparsity_kernel_combi}
	Similar to the working set $T_*$ in \eqref{work_set_T*} and the associated support vectors, the working set $S_k$ in \eqref{work_set_S_k} and its limit set $S_*$ renders \emph{sparsity} in the combination of the kernels $\{\kappa_\ell\}$ for the MKL task, because the constraint \eqref{d_l_simplex_02} can be interpreted as $\|\db\|_1=1$, an equality involving the $\ell_1$-norm, due to the nonnegativity condition \eqref{d_l_simplex_01}. Such an effect of sparsification can also be observed from our numerical example in the next section.
\end{remark}

\section{Simulations}\label{sec:sims}


       In this section,  we report results of numerical experiments using Matlab on a Dell laptop workstation with an Intel Core i7 CPU of 2.5 GHz on synthetic and real data sets to demonstrate the  effectiveness of the proposed MKL-\LzeroneSVM\ in comparison with
       the SimpleMKL approach of \cite{rakotomamonjy2008simplemkl}.
Our code can be found at  \url{https://github.com/shiyj27/MKL-L01-SVM/tree/main/MKL_L0-1_SVM-master/MKL_L0-1_SVM-master}.
    
        \textbf{(a) Stopping criterion.} In the implementation, we terminate Algorithm~\ref{alg:admm_MKL_SVM} if the iterate $(\wb^k, \db^k, b^k, \ub^k, \zb^k; \thetab^k, \alpha^k, \lambdab^k)$ satisfies the condition
        \begin{equation}\label{stopping_criteria}
        	\max \left\{\beta_1^k,\beta_2^k,\beta_3^k,\beta_4^k,\beta_5^k,\beta_6^k,\beta_7^k,\beta_8^k \right\}< \tol,
        \end{equation}
        where the number $\tol>0$ is the tolerance level and 
        \begin{equation}
        	\begin{array}{llll}
        		\beta_1^k:=\|\ub^k-\ub^{k-1}\|,\quad
        		& \beta_2^k:=\|\wb^k-\wb^{k-1}\|,\  
        		& \beta_3^k:=|b^k-b^{k-1}|,\quad
        		& \beta_4^k:=\|\zb^k-\zb^{k-1}\| \\
        		\beta_5^k:=\|\db^k-\db^{k-1}\|,\quad
        		& \beta_6^k:=\|\thetab^k-\thetab^{k-1}\|,\ 
        		& \beta_7^k:=|\alpha^k-\alpha^{k-1}|,\quad
        		& \beta_8^k:=\|\lambdab^k-\lambdab^{k-1}\|.
        	\end{array}    	
        \end{equation}
        Such a condition says, in plain words, that two successive iterates are sufficiently close. 
        
        \textbf{(b) Parameters setting.} In Algorithm~\ref{alg:admm_MKL_SVM}, the parameters $C$ and $\rho_1$ characterize the P-stationary-point condition \eqref{inf_stationary_cond_Lagrangian_u} (see the proof of Theorem~\ref{thm_converg_implies_loc_opt}), and the working set \eqref{work_set_T_k} which is related to the number of support vectors.
         In order to choose the parameters $C$, $\rho_1$, $\rho_2$ and $\rho_3$, the standard $10$-fold Cross Validation (CV) is employed on the training data, where the candidate values of $C$ and $\rho$'s are selected from the same set $\{2^{-2},2^{-1},\cdots,2^{8}\}$.  The parameter combination with the highest CV accuracy is picked out. 
         
         In addition, we choose $10$ Gaussian kernels $\{\kappa_\ell\}$
         with hyperparameters listed in Tabel~\ref{table:hyperparameters2} which are shared by both MKL-\LzeroneSVM\ and SimpleMKL for \emph{all} the data sets. 
         We also set the maximum number of iterations $\texttt{max\_iter}=10^3$ in Algorithm~\ref{alg:admm_MKL_SVM} and the tolerance level $\tol=10^{-3}$ in \eqref{stopping_criteria}.

         \begin{table}[H]
         	\begin{center}
         		\begin{tabular}{cccccccccc}
         			\toprule
         			$\sigma_1$&$\sigma_2$&$\sigma_3$&$\sigma_4$&$\sigma_5$&$\sigma_6$&$\sigma_7$&$\sigma_8$&$\sigma_9$&$\sigma_{10}$ \\
         			
         			\midrule
         			$0.1$&$0.2$&$0.3$&$0.5$&$0.7$&
         			$1$&$1.2$&$1.5$&$1.7$&$2$ \\
         			\bottomrule
         		\end{tabular}
         	         \caption{A quite arbitrary choice of the hyperparameters for $10$ Gaussian kernels.
         	}
         	\label{table:hyperparameters2}
         	\end{center}
         \end{table} 
         
         For the starting point, we set $\wb^0 = \ub^0=\lambdab^0=\zerob$, $\zb^0 = \thetab^0=\zerob$,
         $\alpha^0=0$ and $\db^0=\frac{1}{L} \oneb$, $b^0=1$ or $-1$. 
         The reason for such a choice is explained in the following. Let us recall the objective function $J(\wb,\db,b)$ in \eqref{opt_MKL_02.22}. Then we immediately notice that $J(\zerob, \frac{1}{L} \oneb, 1) = Cm_-$ and $J(\zerob, \frac{1}{L} \oneb, -1) = Cm_+$ where $m_+$ and $m_-$ denote the numbers of positive and negative components in the label vector $\yb$.
         Therefore, we should choose $(\wb^0, \db^0, b^0)$ such that $J(\wb^0, \db^0, b^0)\leq C\min\{m_+, m_-\}$.

         \textbf{(c) Evaluation criterion.} To evaluate the classification performance of our MKL-\LzeroneSVM, we use the testing accuracy (\texttt{ACC}): 
         \begin{equation*}
         	\texttt{ACC}:=1-\frac{1}{2m_{\test}}\sum_{j=1}^{m_{\test}} \left|\sign\left(\sum_{\ell}f^*_\ell(\xb^{\test})+b^*\right)-y^{\test}_j \right|,
         \end{equation*}
         where $\{(\xb^{\test}_j,y^{\test}_j):j=1,\cdots,m_{\test}\}$ contains the testing data.
         Here the quantity $\sum_{\ell}f^*_\ell(\xb^{\test})$ can be computed using \eqref{inf_stationary_cond_Lagrangian_f}. More specifically, for each $\ell\in\Nbb_L$ we can evaluate $f_\ell^\ast(\cdot)  =-d_\ell^\ast\sum_{i}\lambda_i^* y_i \kappa_\ell(\,\cdot\,, \xb_i)$ on the testing data using the convergent iterate produced by Algorithm~\ref{alg:admm_MKL_SVM}.
         
         
         \textbf{(d) Simulation results.}  
         We have conducted experiments on six real data sets\,\footnote{These data sets are downloaded from the UCI repository  \url{https://archive.ics.uci.edu/}.
         }.
         For each data set, standard feature normalization is performed. Then the Algorithm~\ref{alg:admm_MKL_SVM} is run $30$ times with different training and testing sets ($70\%$ of the data are randomly selected for training and the rest $30\%$ for testing), and the last $20$ \texttt{ACC}s are averaged. 
         In order to avoid possibly bad local minimizer, we use a ``warm start'' strategy which initializes the algorithm with the convergent iterate of the previous run, hoping that the performance of classifier eventually becomes stable.
         For comparisons, we run the SimpleMKL algorithm\,\footnote{The code for SimpleMKL is found at  \url{https://github.com/shiyj27/SimpleMKL/tree/main} which is adapted for simulations reported in this paper from the source at \url{https://github.com/maxis1718/SimpleMKL}.}
         \begin{itemize}
         	\item[1)] with the same ten Gaussian kernels (see Table~\ref{table:hyperparameters2} for the hyperparameters), 
         	\item[2)] and with additional Gaussian kernels for each component of the data vector\,\footnote{For a data set with $\xb=[x_1, \dots, x_n]^\top\in\Rbb^n$, one includes the Gaussian kernels on $\Rbb^n \times \Rbb^n$ with hyperparameters in Table~\ref{table:hyperparameters2}. In addition, for each $j=1, \dots, n$, Gaussian kernels on $\Rbb\times\Rbb$ in the variable $(x_j, y_j)$ with  the same ten hyperparameters are used. Therefore, the total number of candidate kernels is $L=10(n+1)$.}.
         \end{itemize}
         The results are collected in Table~\ref{tab:Accuracy} where one can see that 
         our method achieves the best \texttt{ACC} on two data sets.
         The overall performances of the three methods in terms of the \texttt{ACC} are rather comparable, and the
         difference between the \texttt{ACC} of MKL-$L_{0/1}$-SVM and the best one among the three methods is within $3\%$ on each data set.
         In terms of the sparsity in the kernel combination, MKL-$L_{0/1}$-SVM seems the best. On the other hand, SimpleMKL with $L=10$ outperforms the rest in the computational time which falls within expectation since the ADMM algorithm in general converges with a sublinear rate. At last, SimpleMKL with additional kernels for individual features wins in terms of the number of support vectors (\texttt{NSV}, sparsity in the data usage) which is equal to $|T_*|$.


         Figure~\ref{fig:result_iono_Wpbc} depicts changes of $\db$ with respect to the number of ADMM iterations
         for the three largest components. For the Ionosphere data set, in particular, the largest component of $\db$ reaches $1$ when the algorithm converges and the rest components are all zero.


\begin{table}[ht]
	\centering
	Ionosphere \quad$m=351$\quad $n=33$
	\begin{tabular}{|>{\centering\arraybackslash}p{4cm}||>{\centering\arraybackslash}p{2cm}|>{\centering\arraybackslash}p{2cm}|>{\centering\arraybackslash}p{2cm}|>{\centering\arraybackslash}p{2cm}|}
		\hline
		Algorithm & \# Kernels& \texttt{ACC} (\%) & Time (s) & \texttt{NSV}  \\
		\hline
		SimpleMKL ($L=340$) & 28.6$\pm$5.2 & 91.1$\pm$1.9 &12.7$\pm$3.4 & \textbf{115$\pm$4.8} \\
		\hline
		MKL-\LzeroneSVM & \textbf{1$\pm$0} & 89.5$\pm$2.2 &4.6$\pm$2.7 & 193$\pm$4.3  \\
		\hline
		SimpleMKL ($L=10$) & \textbf{1$\pm$0} & \textbf{92.4$\pm$2.2} &\textbf{0.08$\pm$0.02} & 162$\pm$3.4  \\
		\hline
	\end{tabular}
	
	~\\
	
	
	\centering
	Sonar \quad$m=208$\quad $n=60$
	\begin{tabular}{|>{\centering\arraybackslash}p{4cm}||>{\centering\arraybackslash}p{2cm}|>{\centering\arraybackslash}p{2cm}|>{\centering\arraybackslash}p{2cm}|>{\centering\arraybackslash}p{2cm}|}
		\hline
		Algorithm & \# Kernels& \texttt{ACC} (\%) & Time (s) & \texttt{NSV}  \\
		\hline
		SimpleMKL ($L=610$) & 43.8$\pm$6.6 &  \textbf{78.8$\pm$5.3} &12.6$\pm$2.3 & \textbf{112$\pm$6.3} \\
		\hline
		MKL-\LzeroneSVM & \textbf{1$\pm$0} & 76.5$\pm$3.7 &1.04$\pm$0.08 & 145$\pm$1.5 \\
		\hline
		SimpleMKL ($L=10$) & \textbf{1$\pm$0} & 63.3$\pm$3.1 &\textbf{0.05$\pm$0.02} & 145$\pm$0.5 \\
		\hline
	\end{tabular}

	~\\
	
	\centering
	Wpbc  \quad$m=198$\quad $n=33$
	\begin{tabular}{|>{\centering\arraybackslash}p{4cm}||>{\centering\arraybackslash}p{2cm}|>{\centering\arraybackslash}p{2cm}|>{\centering\arraybackslash}p{2cm}|>{\centering\arraybackslash}p{2cm}|}
		\hline
		Algorithm & \# Kernels& \texttt{ACC} (\%) & Time (s) & \texttt{NSV}  \\
		\hline
		SimpleMKL ($L=340$) & 28.1$\pm$6 &  76.6$\pm$0.4 &3.01$\pm$0.48 & \textbf{124$\pm$8} \\
		\hline
		MKL-\LzeroneSVM & \textbf{3.4$\pm$2.2} & \textbf{76.8$\pm$0.6} &0.65$\pm$0.15 & 138$\pm$0\\
		\hline
		SimpleMKL ($L=10$) & 8.9$\pm$0.73 & 76.7$\pm$0 &\textbf{0.04$\pm$0.02} & 137$\pm$2.4 \\
		\hline
	\end{tabular}
	
	
	~\\
	
	\centering
	Pima  \quad$m=768$\quad $n=8$
	\begin{tabular}{|>{\centering\arraybackslash}p{4cm}||>{\centering\arraybackslash}p{2cm}|>{\centering\arraybackslash}p{2cm}|>{\centering\arraybackslash}p{2cm}|>{\centering\arraybackslash}p{2cm}|}
		\hline
		Algorithm & \# Kernels& \texttt{ACC} (\%) & Time (s) & \texttt{NSV}  \\
		\hline
		SimpleMKL ($L=90$) & 30.4$\pm$5.3 &  75.0$\pm$1.9 &6.4$\pm$0.69 & \textbf{348$\pm$15} \\
		\hline
		MKL-\LzeroneSVM & 2.4$\pm$0.8 & 73.3$\pm$4.1 &106$\pm$58 & 490$\pm$34.3\\
		\hline
		SimpleMKL ($L=10$) & \textbf{1$\pm$0} & \textbf{75.9$\pm$2.1} &\textbf{0.24$\pm$0.03} & 361$\pm$4.2 \\
		\hline
	\end{tabular}
	
	~\\
	
	\centering
	Liver  \quad$m=579$\quad $n=9$
	\begin{tabular}{|>{\centering\arraybackslash}p{4cm}||>{\centering\arraybackslash}p{2cm}|>{\centering\arraybackslash}p{2cm}|>{\centering\arraybackslash}p{2cm}|>{\centering\arraybackslash}p{2cm}|}
		\hline
		Algorithm & \# Kernels& \texttt{ACC} (\%) & Time (s) & \texttt{NSV}  \\
		\hline
		SimpleMKL ($L=100$) & 25.2$\pm$7.4 &  71.3$\pm$0 &4.7$\pm$0.58 & \textbf{400$\pm$6} \\
		\hline
		MKL-\LzeroneSVM & \textbf{1$\pm$0} & \textbf{71.7$\pm$0.8} &45$\pm$5.5 & 405$\pm$0\\
		\hline
		SimpleMKL ($L=10$) & 10$\pm$0.7 & 71.3$\pm$0 &\textbf{0.43$\pm$0.05} & 405$\pm$0\\
		\hline
	\end{tabular}
	
	~\\
	
	\centering
	Haberman  \quad$m=305$\quad $n=3$
	\begin{tabular}{|>{\centering\arraybackslash}p{4cm}||>{\centering\arraybackslash}p{2cm}|>{\centering\arraybackslash}p{2cm}|>{\centering\arraybackslash}p{2cm}|>{\centering\arraybackslash}p{2cm}|}
		\hline
		Algorithm & \# Kernels& \texttt{ACC} (\%) & Time (s) & \texttt{NSV}  \\
		\hline
		SimpleMKL ($L=40$) & 16.7$\pm$3.4 &  73.7$\pm$1.0 &0.19$\pm$0.04 & 209$\pm$1.8\\
		\hline
		MKL-\LzeroneSVM & \textbf{1$\pm$0} & 72.8$\pm$1.5 &44.5$\pm$2.2 & \textbf{197$\pm$2}\\
		\hline
		SimpleMKL ($L=10$) & 8.4$\pm$2.0 & \textbf{73.9$\pm$0} &\textbf{0.09$\pm$0.02} & 210$\pm$1.5 \\
		\hline
	\end{tabular}
	\caption{ Average performance measures for MKL-\LzeroneSVM, SimpleMKL with ten Gaussian kernels (last row in each subtable), and SimpleMKL with ten Gaussian kernels of the feature vector and of individual features (second row in each subtable).}
	\label{tab:Accuracy}
\end{table}

         \begin{figure}
        	\begin{subfigure}[H]{0.5\textwidth}
        		\centering
        		\includegraphics[width= 0.95\linewidth]{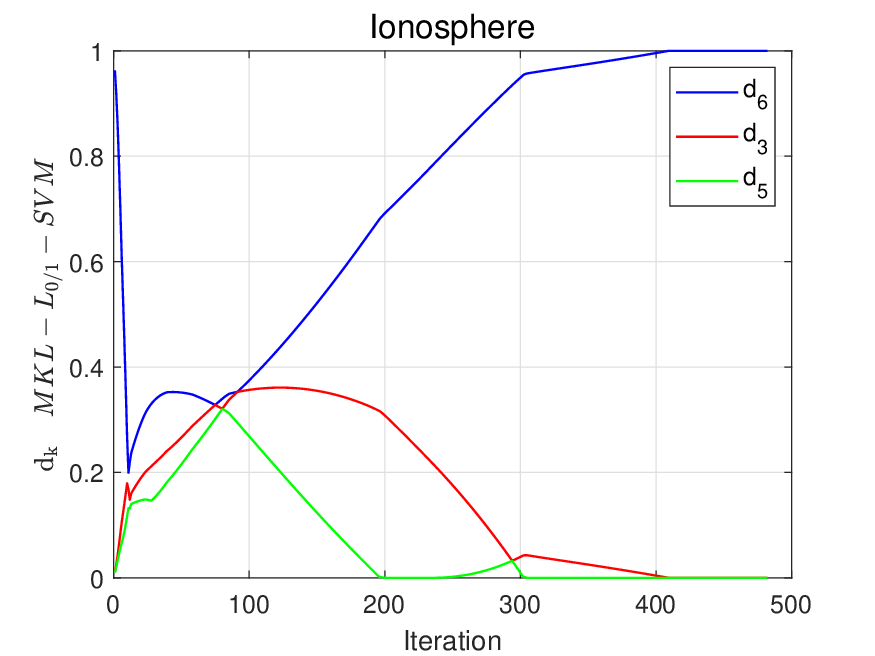}
        	\end{subfigure}
        	\hfill
        	\begin{subfigure}[H]{0.5\textwidth}
        		\centering
        		\includegraphics[width= 0.95\linewidth]{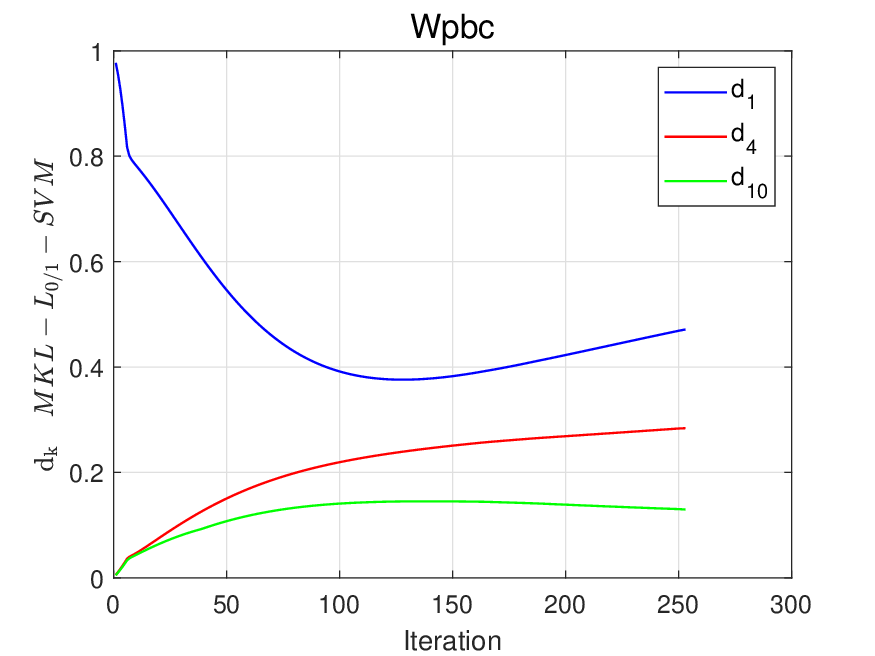}
        	\end{subfigure}
        	\caption{Evolution of the three largest weights $d_\ell$ in MKL-$L_{0/1}$-SVM on two data sets. \emph{Left}: Ionosphere, \emph{Right}: Wpbc.}
        	\label{fig:result_iono_Wpbc}
        \end{figure}

\section{Conclusion}\label{sec:Conclusion}   



In this paper, we have extended the $L_{0/1}$-SVM to a multiple kernel learning setting where the minimization of a regularized $(0, 1)$-loss function is accomplished together with the optimal combination of some given kernel functions. 
Despite the fact that the corresponding optimization problem is nonsmooth and nonconvex, we have shown that the global and local minimizers can be characterized by a set of KKT-like optimality conditions. 
For the numerical solution of the MKL-$L_{0/1}$-SVM problem, an efficient ADMM algorithm has been proposed to obtain a local optimum.
Simulation results on real data sets have manifested the effectiveness of our theory and algorithm.

Regarding future studies, one should deal with the convergence question of Algorithm~\ref{alg:admm_MKL_SVM} in view of the works \cite{li2015global,hong2016convergence,wang2019global,boct2020proximal} from the optimization community. 


     
\acks{The authors would like to thank Mr. Jiahao Liu for his assistance in the Matlab implementation of Algorithm~\ref{ADMM-Solver}.
	
	This work was support supported in part by Shenzhen Science and Technology Program (Grant No.~202206193000001-20220817184157001), the Fundamental Research Funds for the Central Universities, and the ``Hundred-Talent Program’’ of Sun Yat-sen University. 
    }

\vskip 0.2in
\bibliography{references}

\end{document}